\newtheorem{theorem}{Theorem}
\title{Learning to Predict Independent of Span}
\def\beq{\begin{equation}}
\def\eeq{\end{equation}}
\def\a{\alpha}
\def\b{\beta}
\def\d{\delta}
\def\l{\lambda}
\def\tr{^\top}
\def\g{\gamma}
\def\th{\bm\theta}
\def\onth{\bm{\tilde\th}}
\def\tth{\bm{\theta}}
\def\p{\bm\phi}
\def\E#1{{\mathbb E}\!\left[#1\right]}
\def\P#1{{\mathbb P}\!\left(#1\right)}
\def\aa{\bm a}
\def\ee{\bm e}
\def\el{\ee}
\def\egl{\ee}
\def\t#1#2{\th_{#1}^{#2}}
\def\ont#1#2{\bm{\tilde{\theta}}_{#1}^{#2}}
\def\tt#1#2{\bm{\theta}_{#1}^{#2}}
\def\ZZ#1#2{\Z_{#1}^{#2}}
\def\Zb#1#2{\Z_{#1}^{#2}}
\def\Zl#1#2{\Z_{#1}^{#2}}
\def\Zlg#1#2{\Z_{#1}^{#2}}
\def\Zblg#1#2{\bar{\Z}_{#1}^{#2}}
\def\tau{T}
\def\P{P} 
\def\F{{\bf F}} 
\def\I{{\bf I}}
\def\R{X}
\def\Z{Z}
\def\der_emph#1{#1}
\author{
  Hado van Hasselt \thanks{Google DeepMind} \and Richard S. Sutton \thanks{Reinforcement Learning and Artificial Intelligence Laboratory\newline \-\hspace{.5cm} Department of Computing Science, University of Alberta\newline \-\hspace{.5cm} Edmonton, Alberta, Canada T6G 2E8}}
\begin{document}
\maketitle

\begin{abstract}%
We consider how to learn multi-step predictions efficiently.
Conventional algorithms wait until observing actual outcomes before performing the computations to update their predictions.
If predictions are made at a high rate or span over a large amount of time, substantial computation can be required to store all relevant observations and to update all predictions when the outcome is finally observed.  We show that the exact same predictions can be learned in a much more computationally congenial way, with uniform per-step computation that does not depend on the span of the predictions.  We apply this idea to various settings of increasing generality, repeatedly adding desired properties and each time deriving an equivalent span-independent algorithm for the conventional algorithm that satisfies these desiderata.
Interestingly, along the way several known algorithmic constructs emerge spontaneously from our derivations, including dutch eligibility traces, temporal difference errors, and averaging.  This allows us to link these constructs one-to-one to the corresponding desiderata, unambiguously connecting the `how' to the `why'.  Each step, we make sure that the derived algorithm subsumes the previous algorithms, thereby retaining their properties.  Ultimately we arrive at a single general temporal-difference algorithm that is applicable to the full setting of reinforcement learning.
\end{abstract}

\section{Learning long-term predictions}


The \emph{span} of a multi-step prediction is the number of steps elapsing between when the prediction is made and when its target or ideal value is known. We consider the case in which predictions are made repeatedly, at each of a sequence of discrete time steps. For example, if on each day we predict what a stock market index will be in 30 days, then the span is 30, whereas if we predict at each hour what the stock market index will be in 30 days, then the span is $30 \times 24 = 720$.

The span may vary for individual predictions in a sequence. For example, if we predict on each day what the stock-market index will be at the end of the year, then the span will be much longer for predictions made in January than it is for predictions made in December. If the span may vary in this way, then we consider the span of the prediction sequence to be the maximum possible span of any individual prediction in the sequence. For example, the span of a daily end-of-year stock-index prediction is 365. Often the span is infinite. For example, in reinforcement learning we often learn value functions that are predictions of the discounted sum of all future rewards in the potentially infinite future  \parencite{SuttonBarto:1998}.

In this paper we consider computational and algorithmic issues in efficiently learning long-term predictions, defined as predictions of large integer span. Predictions could be long term in this sense either because a great deal of clock time passes, as in predicting something at the end of the year, or because predictions are made very often, with a short time between steps (e.g., as in high-frequency financial trading).
The per-step computational complexity of some algorithms for learning accurate predictions depends on the span of the predictions, and this can become a significant concern if the span is large. Therefore, we focus on the construction of learning algorithms whose computational complexity per time step (in both time and memory) is constant (does not scale with time) and independent of span.

This paper features two recurring themes, the first of which is the repeated spontaneous emergence of, often well-known,  algorithmic constructs, directly from our derivations.  We start each derivation by formalizing a desired property and constructing an algorithm that fulfills it, without considering computationally efficiency.  Then, we derive a span-independent algorithm that results on each step in exactly the same predictions.  Interestingly, each time a specific algorithmic construct emerges, demonstrating a clear connection between the desideratum (the `why') and the algorithmic construct (the `how'). For instance, the desire to be independent of span leads to a dutch eligibility trace, which was previously derived only in the more specific context of online temporal difference (TD) learning \citep{vanSeijen:2014}.

The second theme is that we unify the algorithms at each step.  Each time, we make sure to obtain an algorithm that is strictly more general than the previous ones, so that in the end we obtain one single algorithm that can fulfill all the desiderata while remaining computationally congenial.

\section{Outline of the paper}
In this section, we briefly describe the high-level narrative of the paper, without going into technical detail.  In each of the Sections 3 to 8, we describe and formalize one or more desirable properties for our algorithms and then derive a computationally congenial algorithm that achieves this exactly.  We build up to the final, most general, algorithm that is ultimately derived in Section 8 to highlight the connections between desired properties and algorithmic constructs. Making these connections clear is one of the main goals of this paper.

Specifically, in Section \ref{sec:span} we derive a span-independent algorithm to update the predictions for a single final outcome. The algorithm is \emph{offline} in the sense that does not change its predictions before observing the outcome.  The dutch trace emerges spontaneously, which shows that this trace is closely tied to the requirement of span-independent computation.  This emergence is surprising and intriguing because it shows that these traces are not specific to online TD learning, for which they were first proposed \parencite{vanSeijen:2014}.

In Section \ref{sec:online} we derive span-independent updates that update the predictions \emph{online}, towards interim targets that temporarily stand in for the final outcome.  We show that the desire to be online results in the spontaneous emergence of TD errors \parencite{Sutton:1984,Sutton:1988}.  
In this paper we are mostly agnostic to the origin of the interim targets.  These may for instance be given by external experts or by own online predictions, as in standard TD learning \parencite[e.g., see][]{SuttonBarto:1998}.

It can be beneficial to be able to switch smoothly between online and offline updates, on a step-by-step basis, for instance when we do not full trust some of the interim targets that we would use for our online updates.  This allows us to have the best of both worlds: the online predictions stay trustworthy even if some interim targets are wrong, and we are still able to use any useful information immediately when it is observed.  In Section \ref{sec:trust} we consider how to do this efficiently and from our derivation an update emerges that averages the online weights in a separate trusted weight vector.  This is interesting because such averaging is known to improve the convergence rates of online learning algorithms \parencite{Polyak:1992,Bach:2013}, but seems to only rarely be used in reinforcement learning \parencite[as noted, e.g., by][]{Szepesvari:2010}.

Some interim targets may be so informative that we want their effect to persist in the predictions even after observing the final outcome.  For instance, if the final outcome is stochastic and the interim targets are drawn independently from the same distribution it makes sense to average these instead of committing fully only to the final outcome.  In the extreme, we might see an interim target that we trust so much that we do not even care about the actual outcome anymore, for instance because the interim target already takes into account all possible outcomes from that moment rather than only the specific one that will happen to materialize this time, resulting in a more accurate prediction on average than a single final outcome.  In Section \ref{sec:persistence}, we formalize these ideas and show they lead naturally to a form of TD($\l$) \parencite{Sutton:1988,SuttonBarto:1998}.

The $\l$ parameter that governs the amount of persistency of the interim targets can be interpreted as representing a degree of trust: if we trust an interim target fully ($\l = 0$) we do not need to consider later observations, while if we distrust it fully ($\l=1$) it will be replaced by later targets and leave no trace in the final predictions.  This is a different notion of trust than the one considered for the smooth switching between online and offline updates, where the trust was relative to the actual final outcome rather than the expected outcome.  These two forms of trust are compatible and complimentary, and in Section \ref{sec:persistent_trust} we show how to combine them into a single algorithm.

Up to Section \ref{sec:persistent_trust}, we have only considered predicting a single final outcome in an episodic setting. In Section \ref{sec:generalizations} we consider how to deal with two important generalizations of the problem setting: cumulative returns, and soft terminations.  Cumulative returns allow us to see part of the return on each step, and allow us to start learning from these partial returns immediately in the online setting.  Soft terminations allow us to learn about predictions that may conditionally terminate even if the actual process continues, and they allow for non-episodic predictions that may terminal softly on each step rather than completely at a single point in time.  This leads to a single final algorithm that subsumes all previous algorithms as special cases.  The algorithm is similar to the conventional TD($\l$) algorithm but with important differences that ensure that it is exactly equivalent to the desired, but inefficient, algorithm and therefore inherits all its desirable properties.

Because our final algorithm is novel, it is appropriate to analyze it.  In Section \ref{sec:analysis} we prove that the algorithm is convergent under typical mild conditions, and that it converges to the same solution as similar previous algorithms, including TD($\l$).

We conclude the paper with a short discussion in Section \ref{sec:discussion}.

\section{Independence of span and the emergence of traces}\label{sec:span}

We start with a supervised learning setting---predicting the final numeric outcome of an episodic process. An episode of the process starts at time $t=0$ and moves stochastically from state to state generating feature vectors $\p_t$ until termination with a final numeric outcome $\Z$ at final time $\tau$. For example, $\Z$ could be the price of a particular stock that we want to predict, and each episode may be a year, such that time $\tau$ corresponds to the end of the year.

We consider the general case of multi-step predictions ($\tau > 1$), where a prediction is made on each step. The standard supervised learning setting is a special case where in each episode we only make one prediction (such that, without loss of generality, we can take $\tau = 1$). 

Our predictions are linear, that is, the prediction at time $t$ is the inner product of $\p_t$ and a learned weight vector $\th$, denoted $\p_t\tr\th$. The algorithms are indifferent to the origins of the features, which may be handcrafted or learned.\footnote{This includes, for instance, the case where $\p_t$ is the last hidden layer of a neural network.}  The weights have an initial value $\th_0$ that is presumably due to previous episodes. We analyze how the weights change in a single episode (and thus we do not include the episode number in our notation).

At the final time, when $Z$ is observed, we can update all the predictions towards the target as in the classical least mean squares (LMS) algorithm defined by the updates:
\begin{equation}\label{FV_tau}
\th_{t+1} \doteq \th_t + \a_t \p_t \left( \Z - \p_t\tr\th_t \right) ,  ~~~~~t=0,\ldots,\tau-1, 
\end{equation}
where $\a_t>0$ is a step-size parameter that may vary from time step to time step (e.g., as a function of the state at that time).
We call this a \emph{forward view}, because to update the prediction at time $t$ we need to look forward in time to the outcome $\Z$ which is observed at the later time $\tau$.

To perform the updates \eqref{FV_tau} we have to wait until $\Z$ is known and then do the update for all previous time steps $t$.  This requires storing and then computing updates for all the preceding feature vectors. The required computational resources scale with the span of the prediction (the maximum length of an episode), which is what we wish to avoid. We seek incremental computations whose per-time-step complexity is $O(n)$, where $n$ is the number of parameters, and that result in the same weights as \eqref{FV_tau} at the end of the episode. That is, the incremental updates should compute the same $\th_{\tau}$ as \eqref{FV_tau} if they are given the same input (the same $\th_0$, the same sequence $\{\p_t\}_{t=0}^{\tau-1}$, and the same $\Z$).

It may seem that the best we can hope for is to approximate the result computed by the LMS algorithm, because of the strict computational restriction.  Such a trade off between computation and accuracy is not uncommon.  We will however now derive an algorithm that finds the exact same final predictions with much more congenial computation, by carefully analyzing the total change to the weight vector due to the LMS algorithm.

The final step of the algorithm in \eqref{FV_tau} can be rewritten as
\begin{align}
\th_{\tau}
& = \th_{\tau-1} + \a_{\tau-1} \p_{\tau-1} \left( \Z - \p_{\tau-1}\tr\th_{\tau-1} \right)  \nonumber\\
& = \th_{\tau-1} + \a_{\tau-1}\p_{\tau-1}\Z - \a_{\tau-1} \p_{\tau-1} \p_{\tau-1}\tr \th_{\tau-1}\nonumber\\
& = \left( \I - \a_{\tau-1} \p_{\tau-1}  \p_{\tau-1} \tr \right) \th_{\tau-1}  + \a_{\tau-1} \p_{\tau-1} \Z  \nonumber\\ 
& = \F_{\tau-1}  \th_{\tau-1}  + \a_{\tau-1} \p_{\tau-1} \Z \,.\nonumber
\end{align}
Here $\F_t \doteq \I - \a_t \p_t \p_t\tr$ is a \emph{fading} matrix that will be important throughout this paper. Now, continuing,
\begin{align}
\th_{\tau}
& = \F_{\tau-1}  \left( \F_{\tau-2} \th_{\tau-2} + \a_{\tau-2} \p_{\tau-2} \Z \right) + \a_{\tau-1} \p_{\tau-1} \Z\tag{expanding $\th_{\tau-1}$}\\
& = \F_{\tau-1} \F_{\tau-2} \th_{\tau-2} + \left( \F_{\tau-1} \a_{\tau-2}\p_{\tau-2} +  \a_{\tau-1} \p_{\tau-1}  \right) \Z \tag{regrouping}\\
& = \F_{\tau-1} \F_{\tau-2} \left( \F_{\tau-3} \th_{\tau-3} + \a_{\tau-3} \p_{\tau-3} \Z \right) + \left( \F_{\tau-1} \a_{\tau-2}\p_{\tau-2} +  \a_{\tau-1} \p_{\tau-1}  \right) \Z \tag{recursing on $\th_{\tau-2}$}\\
& = \F_{\tau-1} \F_{\tau-2} \F_{\tau-3}\th_{\tau-3} + \left(\F_{\tau-1} \F_{\tau-2}\a_{\tau-3}\p_{\tau-3} + \F_{\tau-1} \a_{\tau-2}\p_{\tau-2} + \a_{\tau-1}\p_{\tau-1}  \right) \Z \tag{regrouping} \\
&~~\vdots\tag{recursing further}\\
& = \underbrace{\F_{\tau-1} \F_{\tau-2}\cdots \F_0\th_0}_{\mbox{$\doteq\aa_{\tau-1}$}} ~+~~  \underbrace{\left( \sum_{t=0}^{\tau-1}\F_{\tau-1} \F_{\tau-2}  \cdots \F_{t+1} \a_t\p_t\right) }_{\mbox{$\doteq\ee_{\tau-1}$}} \Z \nonumber\\
& = \aa_{\tau-1}  + \ee_{\tau-1} \Z \, , \label{th_tau}
\end{align}
where $\aa_t$ and $\ee_t$ are two auxiliary memory vectors.

Importantly, the auxiliary vectors can be updated without knowledge of $\Z$, and with complexity independent of span and proportional to the number of features.
The $\aa_t$ vector stores the effect of the initial weights on the updated weights.  It is initialized as $\aa_0=\th_0$ and can then be updated efficiently with
\begin{align}
\aa_t 
& \doteq \F_t \F_{t-1} \cdots \F_0\th_0 \notag\\
& = \F_t \left( \F_{t-1} \cdots \F_1\th_1 \right) \notag\\
& = \F_t \aa_{t-1}  \notag\\
& =  \aa_{t-1} - \a_t \p_t \p_t\tr \aa_{t-1} \notag\\
& =  \aa_{t-1} + \a_t \p_t
 ( 0 - \p_t\tr \aa_{t-1} ) \,, \qquad t=1, \ldots, \tau-1. \label{aa}
\end{align}
The $\ee_t$ vector is analogous to the conventional eligibility trace \parencite[see:][and references therein]{Sutton:1988,SuttonBarto:1998} but has a special form as first proposed by \textcite{vanSeijen:2014}. It is initialized to $\ee_{-1}=\bm 0$ (or, equivalently, to $\ee_0 = \a_0 \p_0$) and then updated according to
\begin{align} 
\ee_t
& \doteq \sum_{k=0}^t \F_t \F_{t-1} \cdots \F_{k+1} \a_k\p_k \nonumber\\
& = \sum_{k=0}^{t-1} \F_t \F_{t-1} \cdots \F_{k+1} \a_k\p_k ~~+~~ \a_t\p_t \nonumber\\
& = \F_t \underbrace{\sum_{k=0}^{t-1} \F_{t-1} \F_{t-2} \cdots \F_{k+1} \a_k\p_k}_{\mbox{$= \ee_{t-1}$}} ~~+~~ \a_t\p_t \nonumber\\
& = \F_t \ee_{t-1} + \a_t\p_t \label{Fee}\\
& = \ee_{t-1} - \a_t \p_t \p_t\tr \ee_{t-1} + \a_t\p_t \nonumber\\
& = \ee_{t-1} + \a_t \p_t ( 1 - \p_t\tr \ee_{t-1}) 
, \qquad t=0, \ldots, \tau-1.\label{trace} 
\end{align}
An eligibility trace of this special form is called a \emph{dutch trace} \parencite{vanHasselt:2014}.  For comparison, the conventional accumulating trace that it replaces can be written as $\ee_{-1} \doteq {\bm 0}$ and $\ee_t \doteq \ee_{t-1} + \a_t \p_t$.\footnote{We incorporate the step size into both trace updates.  This is a slight deviation from the way these traces are usually written to allow for time-changing step sizes and increased generality.}

The emergence of the dutch trace here is surprising and intriguing because, in contrast to previous work \parencite{vanSeijen:2014,vanHasselt:2014}, the dutch trace has arisen in a setting without temporal-difference (TD) learning.  Eligibility traces are not specific to TD learning at all; they are more fundamental than that.  The need for eligibility traces seems to arise whenever one tries to learn long-term predictions in an efficient manner, that is, with computational complexity that is independent of predictive span.

The auxiliary vectors $\aa_t$ and $\ee_t$ are updated on each time step $t<\tau$ and then, after observing $\Z$ at time $\tau$, are used to compute $\th_{\tau} = \aa_{\tau-1} + \ee_{\tau-1} \Z$, as in \eqref{th_tau}. This way we achieve exactly the same final result as the forward view \eqref{FV_tau}, but with an algorithm whose time and memory complexity per step is uniformly $O(n)$ and independent of span. The complete algorithm can be summarized as:
\beq \label{algFinal}
\begin{array}{ll}
\aa_0 \doteq  \th_0\,,\; \text{ then }\aa_{t+1} \doteq  \aa_{t} + \a_t\p_t ( 0 - \p_t\tr\aa_{t} ),  & t=1, \ldots, \tau-1\,, \\
\ee_{-1} \doteq  {\bm 0}\,,\, \text{ then }\ee_t \doteq  \ee_{t-1} + \a_t\p_t ( 1 - \p_t\tr\ee_{t-1} ),  & t=0, \ldots, \tau-1\,,\\
\th_{\tau} \doteq  \aa_{\tau-1} + Z\ee_{\tau-1}\,.
\end{array}
\eeq
The vector $\aa_{\tau-1}$ can be interpreted as storing the remaining effect of the initial weights $\th_0$ after all updates in \eqref{FV_tau} have concluded.  The trace $\ee_{\tau-1}$ can be interpreted as storing all we need to know about the feature vectors that were observed during the episode. Together, these vectors allow us to replace all the $\tau$ updates of the forward view with one fully equivalent update at the end of the episode.

We call the span-independent algorithm \eqref{algFinal} the \emph{backward view} corresponding to the forward view defined in \eqref{FV_tau}, because on each step all updates only use information that is available at that time step: we only look backwards in time.  The advantage of this is that the updates can be computed immediately and we do not have to wait and store observations until later.  Until recently, exactly equivalences between forward and backward views were only known to exist for algorithms that update their predictions in batch \citep{SuttonBarto:1998}.  Van Seijen \& Sutton (2014) were the first to derive an online backward view that was exactly equivalent to its forward view in terms of the learned predictions.  The derivation in this section shows that such equivalences exist more generally, including for the LMS update in \eqref{FV_tau}.

When we consider the episode as a whole, there is no gain in total computation time for the span-independent algorithm \eqref{algFinal} compared to the conventional algorithm \eqref{FV_tau}: both algorithms use $O(n\tau)$ computation for the entire episode.  However, in the span-independent algorithm the computation is spread out more evenly with a uniform per-step complexity of $O(n)$, whereas the conventional algorithm performs the bulk of computation at the end, when we finally observe $\Z$.
\begin{figure}
\begin{center}
\includegraphics[width=5.4in]{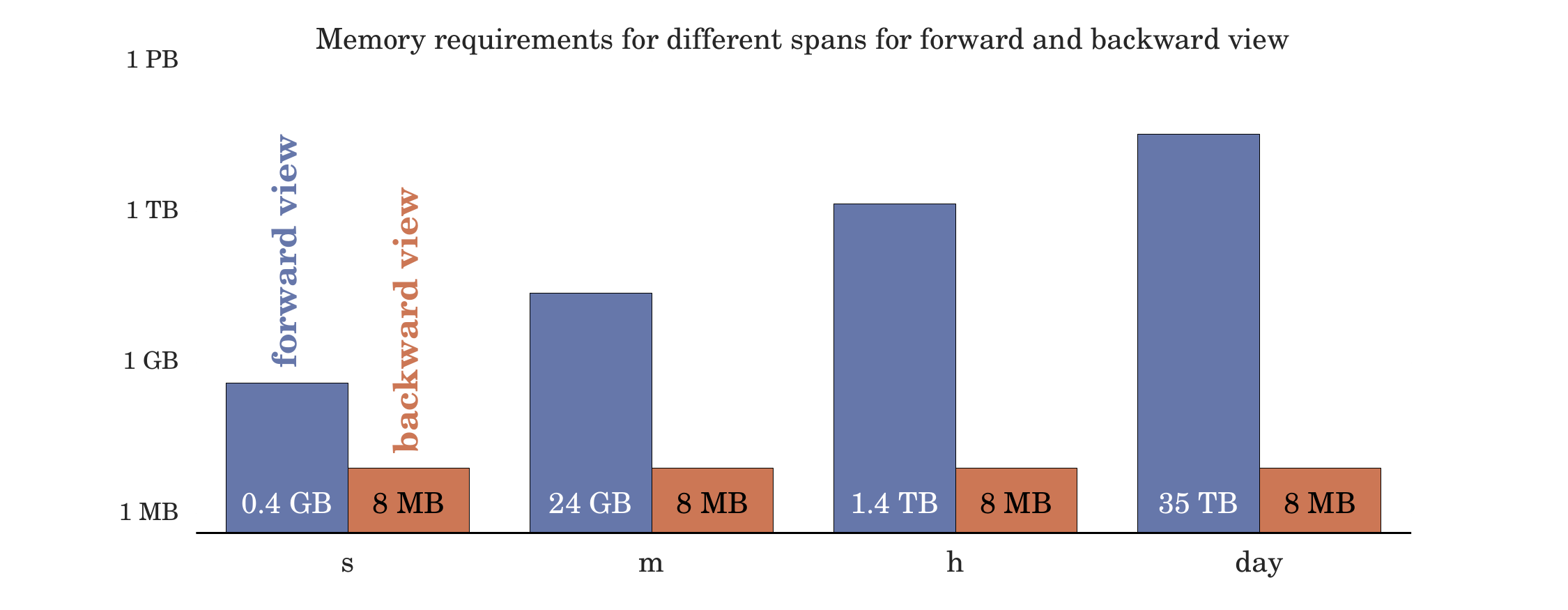}
\caption{\textbf{Example memory requirements.} \textit{The bars show memory requirements for the conventional algorithm \eqref{FV_tau} and the span-independent algorithm \eqref{algFinal}. The brown bars of the conventional algorithm \eqref{FV_tau} that stores all observations over the duration of a second, minute, hour, or day, when one million features of 4 bytes each are observed every $10~\text{ms}$. The blue bars on the right show the memory required by the span-independent algorithm \eqref{algFinal} in the same setting, which is 8~MB for any span. \label{fig:robot_memory}}}
\end{center}
\end{figure}

Additionally, there is a gain in terms of required memory.  Algorithm \eqref{FV_tau} needs to store all previously observed features, leading to memory requirements of order $O(n\tau)$, whereas algorithm \eqref{algFinal} only needs to store $\aa$ and $\ee$ and therefore has constant span-independent memory requirements of order $O(n)$.
In real-world problems, for instance in robotics, it is not uncommon to extract millions of features from the sensory inputs at each step \parencite[e.g.,][]{Montemerlo:2003}, where each step lasts only a fraction of a second.  Consider a robot that generates one million features each $10~\text{ms}$, where each feature is a real number represented with single precision using 4 bytes of memory.  Figure \ref{fig:robot_memory} shows the resulting memory requirements for the conventional algorithm and the span-independent algorithm for different spans of the predictions.  The required storage of the conventional algorithm ranges from 0.4 GB for predictions spanning one second to about 35 Terabytes for predictions spanning one day. While a few Gigabytes of on-board storage is feasible with today's resources, several Terabits will be a significant burden for an autonomous mobile robot.  Concretely, this means that with the conventional algorithm we would have to restrict either the number of features, the frequency at which we make predictions, or the maximum amount of wall time a prediction can span.
The span-independent algorithm scales much better.  In the example in Figure \ref{fig:robot_memory} it only needs to store two vectors with $10^6$ components of 4 bytes each, resulting in memory requirements that are constant at 8~Megabytes.

\section{Online updating and the emergence of TD errors}\label{sec:online}


The algorithms in the previous section do not make any changes to the predictions during the episode; these are \emph{offline} algorithms. 
Yet it would sometimes make sense to update the predictions during an episode, especially if the span is very long and we do not want to wait that long before we start learning.  In this section we introduce an online forward view and derive a span-independent algorithm (the backward view) that on each time step computes the exact same predictions.

An online algorithm cannot update the predictions towards the final outcome $\Z$ during the episode, because $\Z$ is not yet available.  Instead, if we only have observations up to a horizon $h<\tau$ we may want to move the predictions for all earlier times $t<h$ towards some informed guess of what the final outcome will be.  Such a guess plays the role of a target for the updates, like $\Z$ in the forward view \eqref{FV_tau}, but it is used prior to $\Z$ being available; it is an \emph{interim target}.  We use $Z^h$ to denote the interim target at time $h$, which may be based on all the data available up that horizon. The interim target might be from a human expert or it might be, as in TD learning, the current prediction corresponding to the feature vector $\p_h$. Interim targets at times closer to $\tau$ might produce more accurate predictions. In the example of the stock market, as we get closer to the end of the year we may be able to more accurately estimate the final stock price. For now we consider the general case and do not specify the source of the interim targets $Z^h$, for $h=1, \ldots, \tau-1$. Notationally it is convenient to define $\Z^{\tau}\doteq\Z$.
Note that the time index on $\Z^h$ is a superscript rather than a subscript. Our convention is that the superscript position is reserved for the upper limit of the data considered available in an online update. The subscript position is used for the time step whose prediction is being modified. 

To clarify the notation under online updating, we introduce the notation $\t{t}{h}$ for the weights at step $t$ based on all the data up through time $h$.  Using these double subscripts, what we previously called $\th_t$ would now be $\t{t}{\tau}$, because these weight vectors depend on $\Z$ which is considered to arrive at time $\tau$.
The complete set of online updates is then
\begin{equation}\label{interim_FV}
\begin{array}{ll}
\t{0}{h} \doteq \th_0,  &h=0, \ldots, \tau\,;\\
\t{t+1}{h} \doteq \t{t}{h} + \a_t \p_t \left( \Z^h - \p_t\tr\t{t}{h} \right) , 
~~&t=0, \ldots, h-1\,, ~~ h=1, \ldots, \tau\,; \\
\phantom{\t{t+1}{h}}
= \F_t\t{t}{h} + \a_t\p_t\Z^h.
\end{array}
\end{equation}
This online algorithm defines a set of $h$ updates for each interim horizon $h \in \{1, \ldots, \tau \}$.  For each horizon, all predictions are updated towards the latest, and presumably best available, interim target.  Although most updates do not involve the final outcome $\Z$, this algorithm is still considered a forward view, because the prediction at some time $t$ (e.g., $\p_t\tr\t{t}{h}$) is updated using interim targets that arrive later (e.g., $\Z^h$).

We can write out all the double-subscripted weight vectors in a triangle as
\begin{equation}\label{eq:triangle}
\begin{array}{llllllll}
\t{0}{0} & & & & & & \\
\t{0}{1} & \t{1}{1} & & & & & & \\
~\vdots & ~\vdots & \ddots & & & & &  \\
\t{0}{h} & \t{1}{h} & \ldots & \t{h}{h} &&  \\
\t{0}{h+1} & \t{1}{h+1} & \ldots & \t{h}{h+1} & \t{h+1}{h+1} &&  \\
~\vdots & ~\vdots &  & \vdots & \vdots &\ddots & &   \\
\t{0}{\tau} & \t{1}{\tau} & \ldots & \t{h}{\tau} & \t{h+1}{\tau} & \ldots & \t{\tau}{\tau} \,.
\end{array}
\end{equation}
The computation proceeds from top to bottom, one row at a time from left to right. Each row starts with the same initial values $\t{0}{h} = \th_0$ and then computes the sequence $\t{1}{h}$, $\t{2}{h}$ and so on  If we really computed all the different weight vectors in the triangle then the algorithm would be inefficient.  With a computational complexity of $O(nh)$ on each time $h \in \{ 1, \ldots, \tau \}$, the computation would not be constant per time step and would not be independent of the span of the prediction; the last row alone has complexity $O(n\tau)$ and it can only be computed after observing $\Z$ at time $\tau$.  However, instead of computing the whole triangle, perhaps there is a way to incrementally compute just the diagonal, to somehow obtain $\t{h+1}{h+1}$ from $\t{h}{h}$ efficiently on each step. If this can be done, then the entire computation will be of uniform $O(n)$ complexity per time step, independent of span.

To find an efficient update along the diagonal of the triangle, notice first that the forward view \eqref{interim_FV} already provides a way to efficiently move right one step in the triangle.  In other words, we can get to $\t{h+1}{h+1}$ from $\t{h}{h+1}$ for any $h$ with constant $O(n)$ computation.
If we can find an efficient way to step down in the triangle, that is to get to $\t{h}{h+1}$ from $\t{h}{h}$ for any $h$ with constant $O(n)$ computation, then we can combine these two steps into a single $O(n)$ update.  To see if this is possible, we first write down explicitly how each weight vector in the triangle depends on the initial weights and the interim targets.  Similar to our derivation of the final weights in \eqref{th_tau} in the previous section, we can unroll the forward-view updates repeatedly starting from $\t{t}{h}$ and obtaining
\begin{align}
\t{t}{h}
& = \F_{t-1} \t{t-1}{h} + \a_{t-1} \p_{t-1}\Z^{h}  \tag{applying \eqref{interim_FV} to $\t{t}{h}$}\\
& = \F_{t-1} ( \F_{t-2} \t{t-2}{h} +  \a_{t-2}\p_{t-2}\Z^{h}) + \a_{t-1} \p_{t-1} \Z^{h} \tag{applying \eqref{interim_FV} to $\t{t-1}{h}$}\\
& = \F_{t-1} \F_{t-2}\t{t-2}{h} + \left( \F_{t-1} \a_{t-2} \p_{t-2} + \a_{t-1} \p_{t-1} \right) \Z^{h} \tag{regrouping}\\
& = \F_{t-1} \F_{t-2}(\F_{t-3} \t{t-3}{h} +  \a_{t-3} \p_{t-3}\Z^{h} ) + \left( \F_{t-1} \a_{t-2} \p_{t-2} + \a_{t-1} \p_{t-1} \right)  \Z^{h}\tag{applying \eqref{interim_FV} to $\t{t-2}{h}$}\\
& = \F_{t-1} \F_{t-2}\F_{t-3} \t{t-3}{h} +  \left( \F_{t-1}\F_{t-2} \a_{t-3}\p_{t-3} + \F_{t-1} \a_{t-2} \p_{t-2} + \a_{t-1} \p_{t-1} \right)\Z^{h} \notag\\
& ~~\vdots \tag{continuing until we reach $\t{0}{h}$}\\
& = \underbrace{\F_{t-1} \cdots \F_0 \th_0}_{\mbox{$\aa_{t-1}$}} ~~+~~ \underbrace{\left(\sum_{j=0}^{t-1} \F_{t-1} \cdots \F_{j+1} \a_j \p_j\right)}_{\mbox{$\ee_{t-1}$}}\Z^h \notag\\
& = \aa_{t-1} + \ee_{t-1}\Z^h  \,. \label{th_th}
\end{align}
Notice that $\aa_{t-1}$ and $\ee_{t-1}$ depend only on time $t$ and not on the data horizon $h$.  We can use this result to  find the difference between $\t{h}{h+1}$ and $\t{h}{h}$ as
\begin{align}
\t{h}{h+1} - \t{h}{h} 
& = (\aa_{h-1} + \ee_{h-1} \Z^{h+1}) - (\aa_{h-1} + \ee_{h-1} \Z^h) \notag\\
& = \ee_{h-1} ( \Z^{h+1} - \Z^h ) \,.\label{step2}
\end{align}
Here we see the emergence of a temporal difference error $\Z^{h+1} - \Z^h$.  We know from the previous section that $\ee_{h-1}$ can be computed incrementally with \eqref{trace} and therefore does not have to be recomputed from scratch for each new observation.  Therefore, we now have an efficient way to compute $\t{h}{h+1}$ from $\t{h}{h}$ for any $h$ with constant $O(n)$ computation per step that is independent of span.  Now that we have $\t{h}{h+1}$ we can efficiently compute $\t{h+1}{h+1}$ using \eqref{interim_FV}, and we can merge these two steps to compute $\t{h+1}{h+1}$ directly from $\t{h}{h}$. The complete update can then be written as
\begin{align*}
\t{h+1}{h+1}
& = \F_h \t{h}{h+1} + \a_h \p_h \Z^{h+1} \tag{using \eqref{interim_FV}} \\ 
& =  \F_h \left( \t{h}{h} + \ee_{h-1} (\Z^{h+1} - \Z^h ) \right) + \a_h \p_h \Z^{h+1} \tag{using \eqref{step2}} \\
& =  \F_h \t{h}{h} + \F_h \ee_{h-1} (\Z^{h+1} - \Z^h )  + \a_h \p_h \Z^{h+1} \\
& =  \F_h \t{h}{h} + ( \ee_h - \a_h \p_h )(\Z^{h+1} - \Z^h )  + \a_h \p_h \Z^{h+1} \tag{using \eqref{Fee}} \\
& =  \F_h \t{h}{h} +  \ee_h (\Z^{h+1} - \Z^h )- \a_h \p_h (\Z^{h+1} - \Z^h ) + \a_h \p_h \Z^{h+1} \\\
& =  \F_h \t{h}{h} +  \ee_h (\Z^{h+1} - \Z^h )+ \a_h \p_h \Z^h \\
& =  (\I - \a_h \p_h \p_h\tr ) \t{h}{h} + \ee_h(\Z^{h+1} - \Z^h ) + \a_h \p_h \Z^h \tag{by definition of $\F_h$}\\
& = \t{h}{h} + \ee_h (\Z^{h+1} - \Z^h ) + \a_h \p_h \left( \Z^h - \p_h \tr \t{h}{h} \right) \,.
\end{align*}
This update holds for all $h\ge 1$.  The update for $\t{1}{1}$ is given directly by \eqref{interim_FV} as
\[
\t{1}{1} = \t{0}{1} + \a_0 \p_0 (\Z^1-\p_0\tr\t{0}{1}) \,.
\]
For any $\Z^0$ we can rewrite this as
\begin{align*}
\t{1}{1}
&= \t{0}{0} + \a_0 \p_0 (\Z^1-\p_0\tr\t{0}{0}) \tag{using $\t{0}{1} \doteq \th_0 \doteq \t{0}{0}$}\\
&= \t{0}{0} + \a_0 \p_0 (\Z^1- \Z^0 + \Z^0 - \p_0\tr\t{0}{0}) \\
&= \t{0}{0} + \a_0 \p_0 (\Z^1- \Z^0 ) + \a_0 \p_0 (\Z^0 - \p_0\tr\t{0}{0}) \\
&= \t{0}{0} + \ee_0 (\Z^1- \Z^0) + \a_0 \p_0 (\Z^0 - \p_0\tr\t{0}{0}) \,,\tag{using $\ee_0 \doteq \a_0 \p_0$}
\end{align*}
which means that then the update derived above for $h \ge 1$ in fact also holds for $h = 0$.  For concreteness, we will define $\Z^0 \doteq 0$, even though this value does not actually affect any of the weights.

Now that we have an update that can compute $\t{t+1}{t+1}$ from $\t{t}{t}$ for any $t$, we can drop the redundant superscript.  The resulting algorithm is
\beq \label{algTDFinal}
\begin{array}{ll}
\ee_{-1} \doteq {\bm 0}, \text{ then }\ee_t \doteq \ee_{t-1} + \a_t \p_t ( 1 - \p_t\tr\ee_{t-1} ),  & t=0, \ldots, \tau-1, \\
\th_{t+1} \doteq  \th_t + \ee_t\left(\Z^{t+1}-\Z^t\right) +\a_t \p_t (\Z^t-\p_t\tr\th_{t}) , & t=0, \ldots, \tau-1.
\end{array}
\eeq
By construction this backward view is equivalent to the less efficient forward view \eqref{interim_FV} in the sense that $\th_t = \t{t}{t}$ for all $t$.  In contrast to the offline backward view \eqref{algFinal} that we derived in the previous section, we no longer need to compute and store the auxiliary vector $\aa_t$.  All relevant information that was contained therein is now stored directly in the online weights $\th_t$.
%

Although the online backward view yields different predictions during the episode, the final weights $\th_{\tau}$ are exactly equal to those computed by the conventional LMS algorithm \eqref{FV_tau} that constituted our first, offline, forward view.  In terms of the triangle in \eqref{eq:triangle}, the online forward view \eqref{interim_FV} computes the whole triangle, the online backward view \eqref{algTDFinal} efficiently computes only the diagonal, the offline forward view \eqref{FV_tau} computes only the last row, and the offline backward view \eqref{algFinal} from the previous section computes only the final weights.  All three algorithms ultimately result in the same final weights.

\section{Unifying online and offline learning and the emergence of averaging}\label{sec:trust}
The online algorithms from the previous section do not quite subsume the offline algorithms from Section \ref{sec:span}. Although they all reach the same weights by the end of the episode, during the episode their weights are different. The offline algorithm does not change the weights during the episode, and the online algorithm must change them.

One might think that the online algorithm is always better because it can immediately use any incoming relevant information, but it is not so. Suppose the interim targets are always wildly wrong (say due to a poor human `expert'). They would cause the weights of the online algorithm to also be wildly wrong for all steps except the last one at the end of the episode. In this case the weights of the online algorithm would be worse than those of the offline algorithm almost all of the time.

Because interim targets can sometimes be misleading, we might want to reduce their effect on some steps, based on how much we trust these targets.
In this section $\onth_t$ denotes the online weights, which are computed by the span-independent backward view \eqref{algTDFinal}.  The unified weights $\tth_t$ take into account the degree of trust for each interim target and are used to make our predictions. 
If we trust $\Z^t$ fully, we want to obtain the same weights as in the online algorithm, so that $\tth_t = \onth_t$.  If we do not trust $\Z^t$ at all, we want the predictions to remain unchanged, so that $\tth_t = \tth_{t-1}$.  For intermediate degrees of trust, $\b_t \in (0,1)$, the algorithm should smoothly move from one extreme to the other, so the final result of the update should be something like
\beq\label{th_trust}
\tth_{t+1} = (1 - \b_{t+1})\tth_t + \b_{t+1} \onth_{t+1} \,.
\eeq
The above reasoning may sound plausible, but is it sound?  In this section we construct a forward view for partially trusted interim targets, and then derive an equivalent span-independent backward view.  It turns out the resulting update indeed changes the weights precisely as in \eqref{th_trust}.

In the forward view, the online weights that always trust the latest interim target fully will be denoted $\ont{t}{h}$ to differentiate them from the trusted interim weights $\tt{t}{h}$.  
If we have data up to horizon $h$ but we trust the latest interim target $\Z^h$ only with degree $\b_h \in [0,1]$, then the predictions prior to $h$ should update towards $\Z^h$ only to this degree and for the rest, with degree $(1 - \b_h)$, fall back on earlier interim targets.  Similarly we update towards $\Z^{h-1}$ only as far as this interim target was trusted, so with total degree $(1 - \b_h) \b_{h-1}$,   and then further fall back to $\Z^{h-2}$ with total degree $(1 - \b_h) ( 1 - \b_{h-1} ) \b_{h-2}$, and so on.  If we do not trust any of the interim targets observed between the time $t$ of making the prediction and the current horizon $h$, the predictions should remain wherever they were at time $t$.  This can be achieved by updating the prediction at time $t$ towards the then-current prediction $\p_t\tr\tt{t}{t}$ with the remaining degree $(1 - \b_h) (1 - \b_{h-1})\cdots (1 - \b_{t+1})$.  Note that the different multipliers sum to one:
\[
\b_h ~+~ (1 - \b_h) \b_{h-1} ~+~ (1 - \b_h) ( 1 - \b_{h-1} ) \b_{h-2} ~+~ \ldots ~+~ (1 - \b_h) \cdots (1 - \b_{t+1}) = 1 \,.
\]
The total forward view for a prediction at time $t$ with a horizon of $h$ is therefore given by
\def\rl#1#2{\begin{array}{rl}#1 & #2 \end{array}}
\begin{align}
\tt{t+1}{h}
& = \tt{t}{h}   + \b_h \a_t \p_t ( \Z^h- \p_{t}\tr\tt{t}{h} ) \notag\\
& \hspace{29pt} + (1 - \b_h) \b_{h-1} \a_t \p_t ( \Z^{h-1} - \p_{t}\tr\tt{t}{h} ) \notag\\
& \hspace{29pt} + (1 - \b_h) ( 1- \b_{h-1}) \b_{h-2} \a_t \p_t ( \Z^{h-2} - \p_{t}\tr\tt{t}{h} ) \notag\\
& \hspace{29pt}  ~~ \vdots \notag\\
& \hspace{29pt}  + (1 - \b_h)  \cdots (1 - \b_{t+2}) \b_{t+1} \a_t \p_t ( \Z^{t+1} - \p_{t}\tr\tt{t}{h} ) \notag\\
& \hspace{29pt}  + (1 - \b_h) \cdots (1 - \b_{t+2}) (1 - \b_{t+1}) \a_t \p_t ( \p_t\tr\tt{t}{t} - \p_{t}\tr\tt{t}{h} ) \,.\notag \\
\noalign{(now grouping terms $\a_t\p_t( \cdot -  \p_t\tr\tt{t}{h})$ with total weight equal to one)}
& = \tt{t}{h} + \a_t \p_t \bigg( \b_h \Z^h + \notag\\
& \hspace{67pt} (1 - \b_h) \b_{h-1}\Z^{h-1} + \notag\\
& \hspace{67pt} (1 - \b_h) ( 1- \b_{h-1}) \b_{h-2}\Z^{h-2} + \notag\\
& \hspace{67pt}  ~~ \vdots \notag\\
& \hspace{67pt}  (1 - \b_h)  \cdots (1 - \b_{t+2}) \b_{t+1} \Z^{t+1} + \notag\\
& \hspace{67pt}  (1 - \b_h) \cdots (1 - \b_{t+2}) (1 - \b_{t+1}) \p_t\tr\tt{t}{t} - \p_{t}\tr\tt{t}{h} \bigg) \notag\\
& = \tt{t}{h} + \a_t \p_t ( \Zb{t}{h} - \p_t\tr\tt{t}{h} ) \notag\\
& = \F_t \tt{t}{h} + \a_t \p_t \Zb{t}{h} \,, \qquad\qquad t=0, \ldots, h-1\,; ~~~ h=1, \ldots, \tau, \label{FV_trust} \\
\text{where}\quad\Zb{t}{h}
& \doteq \b_h \Z^h \notag\\
& \quad + (1 - \b_h) \b_{h-1} \Z^{h-1} \notag\\
& \quad + (1 - \b_h) ( 1- \b_{h-1}) \b_{h-2}\Z^{h-2} \notag\\
& \quad + \ldots \notag\\
& \quad + (1 - \b_h) \cdots (1 - \b_{t+2}) \b_{t+1} \Z^{t+1} \notag\\
& \quad + (1 - \b_h) \cdots (1 - \b_{t+2}) (1  -\b_{t+1} ) \p_t\tr\tth_t \notag\\
& = \b_h \Z^h + (1 - \b_h) \Zb{t}{h-1} \,,\qquad t=0, \ldots, h-1\,; ~~ h=1, \ldots, \tau ~\text{, and}\label{Zb_def}\\
\Zb{t}{t} & \doteq \p_t\tr\tt{t}{t} \hspace{3.45cm} t = 0, \ldots, \tau-1\,. \label{Zb_tt} 
\end{align}

To derive a span-independent variant of algorithm \eqref{FV_trust}, we first identify how a general weight vector $\tt{t}{h}$ depends on the combined interim targets and the observed feature vectors by applying the recursive definition in \eqref{FV_trust} repeatedly, yielding
\begin{align}
\tt{t}{h} 
& = \F_{t-1} \tt{t-1}{h} + \a_{t-1} \p_{t-1} \Zb{t-1}{h} \tag{applying \eqref{FV_trust} to $\tt{t}{h}$}\\
& = \F_{t-1} \left( \F_{t-2} \tt{t-2}{h} + \a_{t-2} \p_{t-2} \Zb{t-2}{h} \right) + \a_{t-1} \p_{t-1} \Zb{t-1}{h} \tag{applying \eqref{FV_trust} to $\tt{t-1}{h}$}\\
& = \F_{t-1} \F_{t-2} \tt{t-2}{h} + \F_{t-1} \a_{t-2} \p_{t-2} \Zb{t-2}{h} + \a_{t-1} \p_{t-1} \Zb{t-1}{h} \notag\\
& = \F_{t-1} \F_{t-2} ( \F_{t-3} \tt{t-3}{h} + \a_{t-3} \p_{t-3} \Zb{t-3}{h} ) + \F_{t-1} \a_{t-2} \p_{t-2} \Zb{t-2}{h} + \a_{t-1} \p_{t-1} \Zb{t-1}{h} \tag{applying \eqref{FV_trust} to $\tt{t-2}{h}$}\\
& = \F_{t-1} \F_{t-2} \F_{t-3} \tt{t-3}{h} + \F_{t-1} \F_{t-2} \a_{t-3} \p_{t-3} \Zb{t-3}{h} + \F_{t-1} \a_{t-2} \p_{t-2} \Zb{t-2}{h} + \a_{t-1} \p_{t-1} \Zb{t-1}{h} \notag\\
& ~~\vdots\notag\\
& = \underbrace{\F_{t-1} \cdots \F_0 \tt{0}{t}}_{\mbox{$\doteq \aa_{t-1}$}} ~+~ \sum_{k=0}^{t-1} \F_{t-1} \cdots \F_{k+1} \a_k \p_k \Zb{k}{h} \notag\\
& = \aa_{t-1} ~+~ \sum_{k=0}^{t-1} \F_{t-1} \cdots \F_{k+1} \a_k \p_k \Zb{k}{h} \label{th_th_beta} \,.
\end{align}
The last step uses the fact that the initial trusted weights $\tt{0}{t}$ are equal to the initial online weights, such that $\tt{0}{t} = \ont{0}{t} = \th_0$ for any $t$, which means $\aa_t$ is the same as before.
Notice that we have not yet used the definition of $\Zb{t}{h}$ in any way; the derivation so far holds for any combined target.

We now first examine if we can efficiently go down in the triangle, that is, to get to $\tt{h}{h+1}$ from $\tt{h}{h}$:
\begin{align}
\tt{h}{h+1} - \tt{h}{h}
& = \left( \aa_{h-1} + \sum_{k=0}^{h-1} \F_{h-1} \cdots \F_{k+1} \a_k \p_k \Zb{k}{h+1} \right) \tag{$\tt{h}{h+1}$ from \eqref{th_th_beta}}\\
& \qquad - \left( \aa_{h-1} + \sum_{k=0}^{h-1} \F_{h-1} \cdots \F_{k+1} \a_k \p_k \Zb{k}{h} \right) \tag{$\tt{h}{h}$ from \eqref{th_th_beta}}\\
& = \sum_{k=0}^{h-1} \F_{h-1} \cdots \F_{k+1} \a_k \p_k \big( \Zb{k}{h+1} \!- \Zb{k}{h} \big) \tag{merge sums, cancel $\aa_{h-1}$}\\
& = \sum_{k=0}^{h-1} \F_{h-1} \cdots \F_{k+1} \a_k \p_k \bigg( \b_{h+1} \Z^{h+1} + (1 - \b_{h+1}) \Zb{k}{h} - \Zb{k}{h} \bigg) \tag{using \eqref{Zb_def} on $\Zb{k}{h+1}$}\\
& = \sum_{k=0}^{h-1} \F_{h-1} \cdots \F_{k+1} \a_k \p_k \b_{h+1} ( \Z^{h+1} -  \Zb{k}{h} ) \notag\\
& =  \b_{h+1} \underbrace{\left( \sum_{k=0}^{h-1} \F_{h-1} \cdots \F_{k+1} \a_k \p_k \right)}_{\mbox{$=\ee_{h-1}$}} \Z^{h+1} ~~-~~ \b_{h+1} \underbrace{\sum_{k=0}^{h-1} \F_{h-1} \cdots \F_{k+1} \a_k \p_k \Zb{k}{h}}_{\mbox{$= \tt{h}{h}  - \aa_{h-1}$, from \eqref{th_th_beta}}} \notag\\
& =  \b_{h+1} \ee_{h-1} \Z^{h+1} ~~-~~ \b_{h+1} (\tt{h}{h}  - \aa_{h-1}) \notag\\
& =  \b_{h+1} \underbrace{( \aa_{h-1} + \ee_{h-1} \Z^{h+1} )}_{\mbox{$= \ont{h}{h+1}$, from \eqref{th_th}}} ~~-~~ \b_{h+1} \tt{h}{h} \notag\\
& = \b_{h+1} ( \ont{h}{h+1} - \tt{h}{h} )
\end{align}
Thus $\tt{h}{h+1}$ can be written as a simple combination of the previous trusted weights $\tt{h}{h}$ and the interim online weights $\ont{h}{h+1}$ with
\beq\label{step2_trust}
\tt{h}{h+1} = ( 1 - \b_{h+1} ) \tt{h}{h} + \b_{h+1} \ont{h}{h+1} \,.
\eeq
We can plug this value into the definition of $\tt{h+1}{h+1}$ to find
\begin{align*}
\tt{h+1}{h+1}
& = \F_h \tt{h}{h+1} + \a_h \Zb{h}{h+1} \p_h &\tag{using \eqref{FV_trust}} \\
& = \F_h \bigg( ( 1- \b_{h+1} ) \tt{h}{h} + \b_{h+1} \ont{h}{h+1}  \bigg) + \a_h \Zb{h}{h+1} \p_h &\tag{using \eqref{step2_trust}}\\
& = \F_h \bigg( ( 1- \b_{h+1} ) \tt{h}{h} + \b_{h+1} \ont{h}{h+1}  \bigg) + \a_h \left( \b_{h+1} \Z^{h+1} + (1 - \b_{h+1} ) \Z^h \right) \p_h \,. &\tag{using \eqref{Zb_def}} \\
& = \F_h \bigg( ( 1- \b_{h+1} ) \tt{h}{h} + \b_{h+1} \ont{h}{h+1}  \bigg) + \a_h \left( \b_{h+1} \Z^{h+1} + (1 - \b_{h+1} ) \p_h\tr\tt{h}{h} \right) \p_h \,. &\tag{using \eqref{Zb_tt}}
\end{align*}
Now we group the terms depending on whether they are trusted (multiplied with $\b_{h+1}$) or untrusted (multiplied with $(1 - \b_{h+1})$) to simply further to
\begin{align*}
\tt{h+1}{h+1}
& = ( 1 - \b_{h+1} ) \underbrace{\bigg( \F_h  \tt{h}{h} + \a_h \p_h \p_h \tr \tt{h}{h} \bigg)}_{=\tt{h}{h} \text{, using } \F_h = \I - \a_h \p_h \p_h\tr} ~~+~~ \b_{h+1} \underbrace{\bigg( \F_h \ont{h}{h+1} + \a_h \Z^{h+1} \p_h \bigg)}_{\mbox{$=\ont{h+1}{h+1}$, using \eqref{interim_FV}}} \\
& = ( 1 - \b_{h+1} ) \tt{h}{h}  + \b_{h+1} \ont{h+1}{h+1} \,.
\end{align*}
All superscripts now match their corresponding subscripts and so we can write down an algorithm that is equivalent to the forward view in the sense that $\tt{t}{t} = \tth_t$ for all $t$, with
\beq \label{algTDbeta}
\begin{array}{ll}
\ee_{-1} \doteq 0, \text{ then }\ee_t \doteq \ee_{t-1} + \a_t\p_t( 1 - \p_t\tr\ee_{t-1}),  & t=0, \ldots, \tau-1, \\
\onth_{t+1} \doteq  \onth_t + \ee_t\left(\Z^{t+1}-\Z^t\right) +\a_t \p_t (\Z^t-\p_t\tr\onth_{t}) , & t=0, \ldots, \tau-1,\\
\tth_{t+1} \doteq \tth_t + \b_{t+1} ( \onth_{t+1} - \tth_t ) , & t=0, \ldots, \tau-1,
\end{array}
\eeq
where $\b_t \in [0,1]$ is the degree of trust we place in $\Z^t$. The first two lines compute the online weights, and are equal to the online backward view \eqref{algTDFinal} from the previous section.  The last line effectively computes a weighted running average the online weights, according to the sequence $\{\b_t\}_{t=1}^{\tau}$.

Algorithm \eqref{algTDbeta} subsumes the previous algorithms.  If $\b_t = 1$, $\forall t$, then the predictions are equal those of the online algorithm on each step.  If $\b_t = 0$ for $t \in \{ 1, \ldots, \tau-1\}$ and $\b_{\tau} = 1$, then the predictions are equal those of the offline algorithm.  As long as we trust the final outcome, such that $\b_{\tau} = 1$, then all these algorithms result in exactly the same weights at the end of the episode, and algorithm \eqref{algTDbeta} allows us to be flexible about how much we change the predictions during the episode, without requiring us to commit to either fully online or fully offline updates for the whole episode.

\section{Bootstrapping}\label{sec:persistence}
So far we have always fully trusted the actual final outcome.  All interim targets have been deemed irrelevant by the end of the episode, leaving no effect on the computed final weights.  There are cases in which we do not want to discard all interim targets. For instance, consider a stock that crashes down just before the end of the year. Certainly our updated predictions should include the possibility of such a crash, but we may not want to predict it will always crash just before the year's end.  Similarly, suppose it rains on a certain date for which we want to predict the weather.  It then seems wasteful to ignore the sunny weather on the days leading up to that date.  These are examples of cases in which the interim targets are almost as informative as the final outcome.  In some cases, an interim target may even be \emph{more} informative.  For instance, it may be due to a highly-trusted expert that takes into account all possible outcomes from that point in time.   Surely, this expert should not be ignored completely in favor of one random final outcome.

The general idea of updating predictions using other predictions, such as the interim targets, is called \emph{bootstrapping} \citep[see, e.g.,][]{SuttonBarto:1998}.
One way to obtain persistence of interim targets in our final predictions, and to achieve bootstrapping, is to drop the requirement in the previous section that we trust the final outcome fully, and allow $\b_{\tau} < 1$.  The eventual target for our updates is then a weighted average of the interim targets and the final outcome.  For instance, if $\b_t = 1/2$ for all $t \in \{ 1, \ldots, \tau \}$ the final updates will place a weight of $\b_{\tau} = 1/2$ on the final outcome, a weight of $(1 - \b_{\tau})\b_{\tau-1} = 1/4$ on the interim target immediate before then, and so on.

The notion of trust from the previous section applies a single degree of trust for an interim target uniformly to all prior predictions, but this is not always desirable.  Using this definition of trust, if we fully trust an interim target then it replaces all earlier interim targets.
However, even if an interim target if fully trusted for the most recent prediction, it may be inherently less trustworthy for earlier predictions.
To illustrate this, consider flipping a coin three times and predicting the total number of heads. The possible final outcomes are $0$, $1$, $2$, and $3$ heads.  If a trusted expert tells us before the first flip that the coin is fair, that is equivalent to observing a trustworthy interim target of $\Z^1 = 1.5$.  Suppose then the first two flips both result in heads, such that the only remaining possible final outcomes are $2$ and $3$.  If the coin is indeed fair, an interim target of $2.5$ would now be trustworthy target for the prediction made after observing two heads.  However, we would probably not want to replace the earlier interim target of $1.5$ for the first prediction.  Unfortunately, this is exactly what happens with the algorithm from the previous section.

This suggests a different notion of trust, based on the degree of trust we place in an interim target as a stand-in for the \emph{expected} final outcome rather than the \emph{actual} (random) outcome.  If an interim target precisely matches the expected outcome at that point in time, then later targets can then only be noisier or more specific but not more informative, and it should never be replaced by later targets.  

Suppose, concretely, that we fully trust $\Z^t$ under this new notion of trust.  Then, the update for the prediction made at time $t-1$ should disregard any targets that arrive later, including even the final outcome.  Conversely, if we do not trust $\Z^t$ at all, then it should leave no trace in the final updates.  More generally we can update towards $\Z^t$ with an intermediate degree of trust $\eta_t \in [0,1]$.  If the next interim target $\Z^{t+1}$ is trusted with degree $\eta_{t+1}$, we then update our prediction at time $t$ towards it with a total weight of $( 1 - \eta_t ) \eta_{t+1}$.  The update towards $\Z^{t+2}$ will get a total weight of $(1 - \eta_t) ( 1 - \eta_{t+1}) \eta_{t+2}$, and so on until we reach either the final outcome or the current data horizon.  The latest interim target (and the final outcome) is always trusted fully until we move to the next time horizon. Therefore, at horizon $h$ we always place any remaining weight on $\Z^h$ and update towards it with total weight $(1 - \eta_t) ( 1 - \eta_{t+1} ) \cdots ( 1 - \eta_{h-1} )$.

The corresponding total update to the prediction at time $t$ with a current horizon $h$ is then
\begin{align}
\t{t+1}{h} = \t{t}{h} 
& + \eta_{t+1} \a_t \p_t ( \Z^{t+1} - \p_t\tr\t{t}{h} ) \notag\\
& + (1 - \eta_{t+1}) \eta_{t+2} \a_t \p_t ( \Z^{t+2} - \p_t\tr\t{t}{h} ) \notag\\
& + (1 - \eta_{t+1}) ( 1- \eta_{t+2}) \eta_{t+3} \a_t \p_t ( \Z^{t+3} - \p_t\tr\t{t}{h} ) \notag\\
& + \ldots \notag \\
& + (1 - \eta_{t+1}) \cdots (1 - \eta_{h-2}) \eta_{h-1} \a_t \p_t ( \Z^{h-1} - \p_t\tr\t{t}{h} ) \notag\\
& + (1 - \eta_{t+1}) \cdots (1 - \eta_{h-1})  \a_t \p_t ( \Z^h - \p_t\tr\t{t}{h} ) \notag\\
\doteq \t{t}{h}
& + \a_t \p_t \left( \Zl{t}{h} - \p_t\tr\t{t}{h} \right) \,.\label{FV_inverse_trust}
\end{align}
where we have grouped the updates into a single update towards a combined target $\Zl{t}{h}$, just as in the previous section.
This update is perhaps more familiar when we change the notation slightly.  For all $t$, we define $\l_t \doteq 1 - \eta_t$ such that $\l_t$ essentially specifies to  what degree we \emph{dis}trust $\Z^t$.  The combined target is then defined as
\begin{align}
\Zl{t}{h}
& \doteq (1 - \l_{t+1}) \Z^{t+1} \notag\\
& \quad + \l_{t+1} (1 - \l_{t+2}) \Z^{t+2} \notag\\
& \quad \ldots \notag\\
& \quad + \l_{t+1} \cdots \l_{h-2} (1 - \l_{h-1}) \Z^{h-1}\notag\\
& \quad + \l_{t+1} \cdots \l_{h-1}\Z^{h}\,. \label{Zl_def}
\end{align}
This target $\Z^h_t$ is known as a $\l$-return \parencite{SuttonBarto:1998}.  The version that truncates at the current horizon $h$ was first proposed by \textcite{vanSeijen:2014}.
The total set of updates is
\begin{align}
\t{0}{t} &\doteq \th_0 \,,& & t = 0, \ldots, \tau \,;\notag\\
\t{t+1}{h}
&\doteq \t{t}{h} + \a_t \p_t ( \Zl{t}{h} - \p_t\tr\t{t}{h} ) \,, \label{FV_lambda}\\
&\doteq \F_t \t{t}{h} + \a_t \p_t \Zl{t}{h} \,, & & t = 0, \ldots, h-1\,;~~ h = 1, \ldots, \tau\,.\notag
\end{align}
If we ultimately distrust all interim targets, then $\l_t = 1$ for all $t$ and $\Zl{t}{h} = \Z^h$.  The algorithm then reduces to the online algorithm from Section~4.  Otherwise, at least some interim targets persist and contribute to the final weights.  At the other extreme, if we trust all interim targets, then $\l_t = 0$ for all $t$ and $\Zl{t}{h} = \Z^{t+1}$. Then, the updates reduce to single-step updates that only use the immediate next interim target.  In that case each update depends only on the immediate next time step and we can drop the superscript $h$ and the forward view \eqref{FV_lambda} reduces to an efficient span-independent algorithm
\[
\th_{t+1} \doteq \th_t + \a_t \p_t ( \Z^{t+1} - \p_t\tr\th_t ) \,,\quad t=0,\ldots,\tau\,.
\]
Apart from this special case, the forward view \eqref{FV_lambda} is computationally inefficient and we desire an efficient span-independent algorithm to get from $\t{h}{h}$ to $\t{h+1}{h+1}$.  In the previous section  we derived how a weight vector depends on any sequence of combined targets $\Zb{t}{h}$, independent on the definition of those targets.  We repeat the result of that derivation, as first given in \eqref{th_th_beta}, here for clarity:
\[
\t{t}{h} 
= \aa_{t-1} + \sum_{k=0}^{t-1} \F_{h-1} \cdots \F_{k+1} \a_k \p_k \Zl{k}{h} \qquad t = 0, \ldots, h-1\,;~~h = 1, \ldots, \tau\,.
\]
Because this equation holds regardless of the definition of $\Zl{t}{h}$, we can apply it to the current algorithm.  In particular we use it to try to find an efficient algorithm to go from  $\t{h}{h}$ to $\t{h}{h+1}$.  If this is possible, we can then use the update \eqref{FV_lambda} to go from $\t{h}{h+1}$ to $\t{h+1}{h+1}$.  We start by writing out the difference as
\begin{align}
\t{h}{h+1} - \t{h}{h}
& = \sum_{k=0}^{h-1} \F_{h-1} \cdots \F_{k+1} \a_k \p_k \Zl{k}{h+1}  ~~-~~ \sum_{k=0}^{h-1} \F_{h-1} \cdots \F_{k+1} \a_t \p_k \Zl{k}{h} \tag{$\aa_{h-1}$ cancels}\notag\\
& = \sum_{k=0}^{h-1} \F_{h-1} \cdots \F_{k+1} \a_k \p_k \left( \Zl{k}{h+1} - \Zl{k}{h} \right) \,.\label{step_dif_Zl}
\end{align}
The combined targets $\Zl{k}{h+1}$ and $\Zl{k}{h}$ share many terms: going back to \eqref{Zl_def} we can see that all interim targets up to $\Z^{h-1}$ will have the exact same multipliers.  These terms cancel, and the remaining difference is given by
\begin{align}
\Zl{k}{h+1} - \Zl{k}{h}
& = \underbrace{\l_{k+1}\cdots \l_{h-1} (1 - \l_h) \Z^h + \l_{k+1}\cdots \l_{h} \Z^{h+1}}_{\mbox{due to $\Zl{k}{h+1}$}} - \underbrace{\l_{k+1}\cdots \l_{h-1} \Z^h}_{\mbox{due to $\Zl{k}{h}$}} \notag\\
& = \l_{k+1}\cdots \l_{h} \left( \Z^{h+1} - \Z^h \right) \,.\label{Zl_dif}
\end{align}
We can then continue from \eqref{step_dif_Zl} with
\begin{align}
\t{h}{h+1} - \t{h}{h}
& = \sum_{k=0}^{h-1} \F_{h-1} \cdots \F_{k+1} \a_k \p_k \left( \Zl{k}{h+1} - \Zl{k}{h} \right) \notag\\
& = \sum_{k=0}^{h-1} \F_{h-1} \cdots \F_{k+1} \a_k \p_k \l_{k+1}\cdots \l_{h} ( \Z^{h+1} - \Z^h )  \tag{using \eqref{Zl_dif}}\\
& = \l_h \underbrace{\left( \sum_{k=0}^{h-1} \F_{h-1} \cdots \F_{k+1} \l_{k+1} \cdots \l_{h-1} \a_k \p_k \right) }_{\mbox{$\doteq \el_{h-1}$}} ( \Z^{h+1} - \Z^h ) \notag\\
& = \l_h \el_{h-1} ( \Z^{h+1} - \Z^h ) \,.\label{step1_lambda}
\end{align}
We again encounter the TD error $\Z^{h+1} - \Z^h$ and, more importantly, a new trace vector $\el_t$ that can be updated efficiently with
\begin{align}
\el_t
& \doteq \sum_{k=0}^{t} \F_{t} \cdots \F_{k+1} \l_{k+1} \cdots \l_{t} \a_k \p_k \notag\\
& =  \sum_{k=0}^{t-1} \F_{t} \cdots \F_{k+1} \l_{k+1} \cdots \l_{t} \a_k \p_k ~~+~~ \a_t \p_t \notag\\
& =  \l_t \F_t \underbrace{\sum_{k=0}^{t-1} \F_{t-1} \cdots \F_{k+1} \l_{k+1} \cdots \l_{t-1} \a_k \p_k}_{\mbox{$ \el_{t-1}$}} ~~+~~ \a_t \p_t \notag\\
& =  \l_t \F_t \el_{t-1} + \a_t \p_t \label{trace_lambda}\\
& =  \l_t  \el_{t-1} + \a_t \p_t ( 1 - \l_t \p_t\tr \el_{t-1} ) \,.\notag
\end{align}
This trace is similar to the one we encountered before, but with the difference that the value of the vector decays towards zero by multiplication with $\l_t$ on each step.  Predictions made prior to a fully trusted interim target (for which $\l_t = 0$) will never be affected by later interim targets because the trace vector is set to zero.  The extent to which the trace extends backward in time depends on the extent to which we have not yet trusted the corresponding interim targets.

We now combine the derived update from $\t{h}{h}$ to $\t{h}{h+1}$ with the update from $\t{h}{h+1}$ to $\t{h+1}{h+1}$ to derive a single efficient update, given by
\begin{align*}
\t{h+1}{h+1}
& = \F_h \t{h}{h+1} + \a_h \p_h \Z^{h+1} \tag{using \eqref{FV_lambda}}\\
& = \F_h \left( \t{h}{h} + \l_h \el_{h-1} ( \Z^{h+1} - \Z^h ) \right) + \a_h \p_h \Z^{h+1} \tag{using \eqref{step1_lambda}}\\
& = \F_h \t{h}{h} + \l_h \F_h \el_{h-1} ( \Z^{h+1} - \Z^h ) + \a_h \p_h \Z^{h+1} \\
& = \F_h \t{h}{h} + ( \el_h - \a_h \p_h ) \left( \Z^{h+1} - \Z^h \right) + \a_h \p_h \Z^{h+1} \tag{using $\l_h \F_h \el_{h-1} = \el_h - \a_h \p_h$, from \eqref{trace_lambda}}\\
& = \F_h \t{h}{h} + \el_h \left( \Z^{h+1} - \Z^h \right) + \a_h \p_h \Z^h \\
& = ( \I - \a_h \p_h \p_h\tr) \t{h}{h} + \el_h \left( \Z^{h+1} - \Z^h \right) + \a_h \p_h \Z^h \\
& = \t{h}{h} + \el_h \left( \Z^{h+1} - \Z^h \right) + \a_h \p_h \left(  \Z^h - \p_h\tr\t{h}{h} \right) \,.
\end{align*}
This concludes our derivation because the value of $\t{h+1}{h+1}$ is now defined fully in terms of the previous weights on the diagonal $\t{h}{h}$ and other quantities that are either directly available upon reaching our new data horizon $h+1$ or, in the case of the trace vector, can be computed with constant $O(n)$ computation per step.  The span-independent algorithm with persistent interim targets is
\beq \label{algTDlambda}
\begin{array}{ll}
\el_{-1} \doteq {\bm 0}, \text{ then }\el_t \doteq \l_t  \el_{t-1} + \a_t \p_t ( 1 - \l_t \p_t\tr \el_{t-1} ),  & t=1, \ldots, \tau\!-\!1, \\
\th_{t+1} \doteq  \th_t + \el_t \left(\Z^{t+1} - \Z^t\right) +\a_t \p_t (\Z^t-\p_t\tr\th_{t})  & t=0, \ldots, \tau\!-\!1.
\end{array}
\eeq
Compared to the online backward view \eqref{algTDFinal}, the only difference is the appearance of $\l_t$ in the update of the trace.  If $\l_t = 1$ for all $t$, we regain the online backward view precisely, demonstrating that the new algorithm is strictly more general.
In contrast to the averaging backward view \eqref{algTDbeta}, from the previous section, we see that for the notion of trust corresponding to $\l$ we do not need to maintain separate online and trusted weights.  Instead, the degree of trust is used to scale the trace vector $\el_t$ down accordingly.  If $\l_t < 1$ for any $t \in \{1, \ldots, \tau\}$, the corresponding interim targets have a lasting effect on the final weight vector and therefore for the first time we may obtain predictions that differ not just during the episode but also at its end.

\section{Combining two notions of trust and the emergence of averaged TD($\l$)}\label{sec:persistent_trust}

Algorithm \eqref{algTDlambda} is a strict generalization of the online algorithm \eqref{algTDFinal}, but it does not subsume the offline algorithm \eqref{algFinal}, or the averaging algorithm \eqref{algTDbeta} that switches smoothly between online and offline updates.  In this section, we combine the ideas from the last two sections to arrive at an algorithm that generalizes and subsumes all previous algorithms, thereby unifying all that came before into a single, general-purpose algorithm.

An offline version of the TD($\l$) algorithm can be obtained by using the online algorithm in \eqref{algTDlambda} to update an online weight vector $\onth_t$ and then defining the trusted weight vector $\tth_t$ to remain equal to the initial weights until the last step, at which time we replace them with the online weights.  An algorithm that switches smoothly between the offline and online cases can then be obtained similar to before, resulting in
\beq \label{algTDbetalambda}
\begin{array}{ll}
\ee_{-1} \doteq {\bm 0}, \text{ then }\ee_t \doteq \l_t  \el_{t-1} + \a_t \p_t ( 1 - \l_t \p_t\tr \el_{t-1} ),  & t=0, \ldots, \tau-1, \\
\onth_{t+1} \doteq  \onth_t + \ee_t \left(\Z^{t+1} - \Z^t\right) + \a_t \p_t (\Z^t-\p_t\tr\onth_{t}) , & t=0, \ldots, \tau-1, \\
\tth_{t+1} \doteq \tth_t + \b_{t+1} ( \onth_{t+1} - \tth_t ) , & t=0, \ldots, \tau-1.
\end{array}
\eeq
The first two lines are the online algorithm \eqref{algTDlambda}, from the previous section.
The last line is equal to the last line in the unified algorithm without persistency of interim targets, as given in \eqref{algTDbeta}, but now using the online weights that use persistent interim targets weighted according to $\l$-returns, as computed in the first two lines.
When $\l_t = 1$ for all $t$ we regain the averaging algorithm \eqref{algTDbeta} without persistent interim targets. When $\b_t = 1$ for all $t$ we regain the online algorithm \eqref{algTDlambda} with persistent interim targets.  So, we have again successfully unified all previously seemingly difference approaches to trust and have arrived at a single general algorithm that subsumes all that came before.

The merits of $\l$-returns are well known \parencite{Sutton:1988,SuttonBarto:1998} but the $\b$-weighting of the online weights is novel to this paper, and it is appropriate to discuss it in a little more detail.  So far we have considered only a single episode, but a major potential benefit of including $\b$ appears when we consider multiple episodes.  For clarity, consider the extreme case where all episodes last only a single step.  Then $\l$ plays no role because there is no interim within each episode; there is only a beginning and an end.  
If we use $m$ to enumerate episodes, such that $\Z_m$ is the true outcome of episode $m$, then the updates for single-step episodes are
\begin{align*}
\onth_{m+1} & = \onth_m + \a_m ( \Z_m - \p_m\tr\onth_m ) \p_m \,,\\
\tth_{m+1} & = \tth_m + \b_{m+1} ( \onth_{m+1} - \tth_m ) \,,
\end{align*}
where we used the fact that the final weights of episode $n$ are the first weights of episode $m+1$.  
Using $\b$, we can do something here that we cannot do with $\l$ alone: we can weight the relative impact of different episodes.  For instance, we can choose to keep the weights and the predictions stationary over multiple episodes, by setting $\b_m = 0$, to reduce the impact of the final outcomes observed in those episodes on our predictions.
Another possibility is to decay the trust, for instance according to $\b_m = \frac{1}{m}$, such that we trust the outcome of the first episode fully ($\b_1 = 1$) and then reduce the trust for each subsequent episode.  Such a choice of $\b$ makes sense if we view the trust we place in outcomes as being relative to the trust we place in the predictions we already have.  As our predictions improve over time, the outcomes become relatively less trustworthy.  With this definition of trust, the trusted weights are  the average of the weights of all previous episodes:
$
\tth_m = \frac{1}{m} \sum_{i=0}^m \onth_i
$.
This specific algorithm is interesting because the predictions according to the averages $\tth_m$ are known to converge to the optimal predictions faster than the predictions according to any sequence of online weights $\onth_m$ \parencite{Polyak:1992,Bach:2013}.
	This shows that the notion of trust as provided by $\b$ gives us something that cannot be obtained with $\l$ alone.  To our knowledge, algorithm \eqref{algTDbetalambda} is the first to generalize this idea of averaging online weights, in a principled fashion, to long-term predictions.


\section{Generalizing to cumulative returns and soft terminations}\label{sec:generalizations}
In this section, we discuss how to extend our algorithms to handle soft terminations and cumulative returns.  Both extensions generalize the episodic final-outcome setting considered above, and the algorithm we derive in this section will subsume all previous algorithms.

Often, we want to predict the cumulation of a signal $\{ \R_t \}_{t=0}^{\tau}$ rather than a single final outcome.  
In the episodic setting, with termination at time $\tau$, we then aim to predict
\[
\ZZ{t}{\tau} \doteq \R_{t+1} + \R_{t+2} + \ldots + \R_{\tau} \,,
\]
where, as before, $t$ is the time of the prediction.
In contrast to final outcomes, these cumulative outcomes depend on the time step of the prediction because at later time steps there will be less signal left to accumulate before the episode ends.  We call this time-dependent outcome the \emph{cumulative return}.

We may wish to update our predictions online, before observing the full cumulative return.  To do so, we need to define interim targets to temporarily take the place of the actual return. An interim target $\ZZ{t}{h}$ up to a horizon $h< \tau$ should in any case include the part of the signal that was already observed. In addition, we introduce a residual prediction $\P_h$ that stands in for the unseen part of the signal, from horizon $h$ to the end of the episode $\tau$.  The full interim target at time $t$ up to horizon $h$ is then
\[
\ZZ{t}{h} = \R_{t+1} + \R_{t+2} + \ldots + \R_h + \P_h \,.
\]
The residual prediction $\P_h$ may for instance be given by an external expert or by our own predictions at time $h$.  For now we are agnostic to its origin and consider the general case.
In any case, we define $\P_{\tau} = 0$ because at termination there is no remaining signal left to predict.  If all signals except the one coinciding with termination are zero, we regain the final-outcome setting where the last signal $\R_{\tau} = \Z$ takes the role of the final outcome. If we further define the interim target at each horizon to be equal to the residual prediction at that horizon, such that $\Z^h = \P_h$, we are back in the online final-outcome setting.  Therefore, cumulative returns are strictly more general than final outcomes.

So far, we have considered episodic predictions where each episode ends with a single final outcome that we wish to predict.  The algorithms extend naturally to multiple episodes by using the final weights of the episode as the initial weights of the next episode.  However, some predictive questions do not fit nicely into this strictly episodic format because they are better thought of as terminating \emph{softly} on each time step.


A soft termination is a conceptual and potentially partial termination of the signal.  Such a soft termination could represent a probability of termination, for instance when we want to take into account the probability of a robot breaking while learning in a simulation in which it never actually does.  Or the soft termination could represent a desire to trade off the imminence and the magnitude of a signal, for instance when we do not just want more money rather than less, but we also want it sooner rather than later.  In both cases the prediction is about a diminishing version of the `raw' signal (e.g., money). 
Here, we are not concerned with the potential reasons for using soft terminations and consider the general case where the termination of the prediction can vary per time step and be anywhere between full continuation and full termination.
Soft terminations allow us to ask more general predictive questions, and even to simultaneously consider multiple predictions that may resolve at different times.


Soft terminations can be modeled by using a continuation parameter $\g_t \in [0,1]$ to denote the amount of termination of our prediction upon reaching time $t$.  This quantity is often called a discount factor, because it discounts the impact of later outcomes compared to earlier ones.  If $\g_t = 1$, no termination happens at time $t$; if $\g_t = 0$, the prediction terminates fully, even if the trajectory may continue.  We consider general sequences of $\g_t$ and only require that eventually every prediction resolves completely, potentially asymptotically, such that $\prod_{i=t}^{\infty} \g_i = 0$ for all $t$.  The episodic setting considered in the previous sections is a special case where $\g_t = 0$ only if $t=\tau$ is the final step of the episode, and $\g_t = 1$ on all other steps.

We are now almost ready to formulate the target for a prediction about a discounted cumulative signal. In order to maintain full generality and compatibility with previous sections, we immediately include persistency of the residual predictions according to a sequence of $\{\l_t\}_{t=0}^{\infty}$ and trust of the resulting combined targets according to a sequence $\{\b_t\}_{t=0}^{\infty}$.
For any horizon $h>t$, the combined interim target for the prediction at time $t$ should include at least the immediate next signal $\R_{t+1}$.  Beyond this first signal we continue with $\g_{t+1}$ and observe the residual prediction $\P_{t+1}$.  We trust this prediction with degree $1 - \l_{t+1}$ as a stand-in for the expected cumulative discounted return, and so its total multiplier is $\g_{t+1} ( 1 - \l_{t+1} )$. If we trust this prediction fully, so that $\l_t = 0$, there is no need to continue further.  Otherwise, we continue for as much as we have not yet terminated according to both $\g_{t+1}$ and $\l_{t+1}$, so that the next signal $\R_{t+2}$ gets a total weight of $\g_{t+1} \l_{t+1}$.  This line of reasoning then continues until we reach the horizon of our current data at time $h$, at which point we place any remaining trust on the most recent residual prediction $\P_h$.
All together, this gives us a combined target for the prediction at time $t$ with data up to time $h>t$:
\begin{align}
\Zlg{t}{h}
& \doteq \R_{t+1} + \g_{t+1} ( 1 - \l_{t+1} ) \P_{t+1} \notag\\
& \quad {} + \g_{t+1} \l_{t+1} ( \R_{t+2} + \g_{t+2} ( 1 - \l_{t+2} ) \P_{t+2} ) \notag\\
& \quad {} + \g_{t+1} \l_{t+1} \g_{t+2} \l_{t+2} ( \R_{t+3} + \g_{t+3} ( 1 - \l_{t+3} ) \P_{t+3} ) \notag\\
& \quad ~~\vdots \notag\\
& \quad {} + \g_{t+1} \cdots \g_{h-2} \l_{t+1} \cdots \l_{h-2} ( \R_{h-1} + \g_{h-1} (1 - \l_{h-1}) \P_{h-1} ) \notag\\
& \quad {} + \g_{t+1} \cdots \g_{h-1} \l_{t+1} \cdots \l_{h-1} ( \R_h + \g_h \P_h ) \,.\label{Zlg_def_non_recursive}
\end{align}
This can be written recursively as
\begin{align}
\Zlg{t}{h} &\doteq \R_{t+1} + \g_{t+1} \big( ( 1 - \l_{t+1} )  \P_{t+1} + \l_{t+1} \Zlg{t+1}{h} \big) \,,\label{Zlg_def}\\
\text{and }~ \Zlg{t}{t} & \doteq \P_t \,, \label{Zlg_tt_def}
\end{align}
Each of these interim targets is trusted according to the trust associated with its horizon, $\b_h$.  Recall from Section \ref{sec:trust} that this trust propagates backward.  If we trust the last target $\Zlg{t}{h}$ fully, there is no need to consider earlier targets.  Otherwise, we multiply $\Zlg{t}{h}$ with its associated trust $\b_h$ and continue to the previous target with degree $(1 - \b_h)$.  This then continues until either we find a target we fully trust, or we reach the then-current predictions $\p_t\tr\tt{t}{t}$ at time $t$, when we made the prediction we are currently updating.  Our final interim targets are then given by
\begin{align}
\Zblg{t}{h}
\doteq {} & \b_h \Zlg{t}{h} \notag\\
& + ( 1 - \b_h ) \b_{h-1} \Zlg{t}{h-1} \notag\\
& + ( 1 - \b_h ) ( 1 - \b_{h-1} ) \b_{h-2} \Zlg{t}{h-2} \notag\\
& \vdots \notag\\
& + ( 1 - \b_h ) \cdots ( 1 - \b_{t+2} ) \b_{t+1} \Zlg{t}{t+1} \notag\\
& + ( 1 - \b_h ) \cdots ( 1 - \b_{t+1} ) \p_t\tr\tt{t}{t} \notag\\
= {} & \b_h \Zlg{t}{h} + ( 1 - \b_h ) \Zblg{t}{h-1} \label{Zblg_def} \\
\text{and }~ \Zblg{t}{t} \doteq {} & \p_t\tr\tt{t}{t} \label{Zblg_tt_def}
\end{align}
where $\tt{t}{t}$ are the trusted weights for time $t$ that we will compute.
The forward-viewing update is then given by
\begin{align}
\tt{0}{t}
& \doteq \th_0 && t = 0, \ldots, \tau \notag\\
\tt{t+1}{h}
& \doteq \tt{t}{h} + \a_t \p_t ( \Zblg{t}{h} - \p_t\tr\tt{t}{h} ) \,, && t = 0, \ldots\, h-1 \,;~~ h = 0, \ldots, \tau \,,\label{FV_blg}\\
& \doteq \F_t \tt{t}{h} + \a_t \p_t \Zblg{t}{h} \,, && t = 0, \ldots\, h-1 \,;~~ h = 0, \ldots, \tau \,.\notag
\end{align}

To find an efficient backward view, again we can start by writing down explicitly the value of $\tt{t}{h}$.  Following a derivation identical to the one for when we first consider trusted weights, as shown in \eqref{th_th_beta} but with the complex target $\Zblg{k}{h}$ replacing $\Zb{k}{h}$, we obtain
\beq\label{tt_blg}
\tt{t}{h} 
= \aa_{t-1} + \sum_{k=0}^{t-1} \F_{t-1} \cdots \F_{k+1} \a_k \p_k \Zblg{k}{h} \qquad t = 0, \ldots, h-1\,;~~h = 1, \ldots, \tau\,.
\eeq
As before, we can apply this to both $\tt{h}{h+1}$ and $\tt{h}{h}$ to find the difference
\begin{align}
\tt{h}{h+1} - \tt{h}{h}
& = \sum_{k=0}^{h-1} \F_{h-1} \cdots \F_{k+1} \a_k \p_k \Zblg{k}{h+1} ~~-~~ \sum_{k=0}^{h-1} \F_{h-1} \cdots \F_{k+1} \a_k \p_k \Zblg{k}{h} \tag{$\aa_{h-1}$ cancels}\\
& = \sum_{k=0}^{h-1} \F_{h-1} \cdots \F_{k+1} \a_k \p_k \bigg( \Zblg{k}{h+1} - \Zblg{k}{h} \bigg) \notag\\
& = \sum_{k=0}^{h-1} \F_{h-1} \cdots \F_{k+1} \a_k \p_k \bigg( \b_{h+1} \Zlg{k}{h+1} + ( 1 - \b_{h+1} ) \Zblg{k}{h} - \Zblg{k}{h} \bigg)\tag{from \eqref{Zblg_def}}\\
& = \sum_{k=0}^{h-1} \F_{h-1} \cdots \F_{k+1} \a_k \p_k \b_{h+1} \bigg( \Zlg{k}{h+1} - \Zblg{k}{h} \bigg) \notag\\
& = \b_{h+1} \sum_{k=0}^{h-1} \F_{h-1} \cdots \F_{k+1} \a_k \p_k \Zlg{k}{h+1} ~~-~~ \b_{h+1} \underbrace{\sum_{k=0}^{h-1} \F_{h-1} \cdots \F_{k+1} \a_k \p_k \Zblg{k}{h}}_{\mbox{$\doteq \tt{h}{h} - \aa_{h-1}$, from \eqref{tt_blg}}} \notag\\
& = \b_{h+1} \left( \aa_{h-1} + \sum_{k=0}^{h-1} \F_{h-1} \cdots \F_{k+1} \a_k \p_k \Zlg{k}{h+1} \right) ~~-~~ \b_{h+1} \tt{h}{h} \,.\label{step1_blg}
\end{align}
The first part of this result, within the brackets, looks familiar: notice the similarity to \eqref{tt_blg}.  The term can be interpreted as the intermediate result $\ont{h}{h+1}$ of a forward view with targets $\Zlg{t}{h}$, as defined in \eqref{Zlg_def}, and updates defined by
\beq\label{FV_lg}
\begin{array}{ll}
\ont{0}{t}
\doteq \onth_0 & t = 0, \ldots, \tau \\
\ont{t+1}{h}
\doteq \ont{t}{h} + \a_t \p_t ( \Zlg{t}{h} - \p_t\tr\ont{t}{h} ) \,, & t = 0, \ldots\, h-1 \,;~~ h = 0, \ldots, \tau \,.
\end{array}
\eeq
This is an online algorithm, comparable to the algorithm for $\l$-returns derived before, but implicitly including cumulative returns and discounting through the definition of $\Zlg{t}{h}$.  The complete forward view \eqref{FV_blg} can then be interpreted as switching smoothly between this online algorithm and an offline algorithm that in the extreme can delay updating the predictions indefinitely.
Analogous to the interim weights shown in \eqref{tt_blg}, the interim weights of this online algorithm satisfy
\beq\label{th_lg}
\ont{t}{h} 
= \aa_{t-1} + \sum_{k=0}^{t-1} \F_{t-1} \cdots \F_{k+1} \a_k \p_k \Zlg{k}{h} \qquad t = 0, \ldots, h\,;~~h = 1, \ldots, \tau\,.
\eeq
We can then continue our derivation from \eqref{step1_blg} with
\begin{align}
\tt{h}{h+1} - \tt{h}{h}
& = \b_{h+1} \underbrace{\left( \aa_{h-1} + \sum_{k=0}^{h-1} \F_{h-1} \cdots \F_{k+1} \a_k \p_k \Zlg{k}{h+1}\right)}_{\mbox{$=\ont{h}{h+1}$, using \eqref{th_lg}}} ~~-~~ \b_{h+1} \tt{h}{h} \notag\\
& = \b_{h+1} ( \ont{h}{h+1} - \tt{h}{h} ) \,.\notag
\end{align}
This implies that if we have $\tt{h}{h}$ and $\ont{h}{h+1}$ we can then compute $\tt{h}{h+1}$ efficiently using
\beq\label{tt_blg_hhnext}
\tt{h}{h+1} = (1 - \b_{h+1} ) \tt{h}{h} + \b_{h+1} \ont{h}{h+1} \,.
\eeq
For now, we ignore the question of how to obtain $\ont{h}{h+1}$ and first focus on how to go from $\tt{h}{h+1}$ to $\tt{h+1}{h+1}$ and, resultingly, from $\tt{h}{h}$ to $\tt{h+1}{h+1}$.
We can now derive
\begin{align*}
\tt{h+1}{h+1}
& = \F_h \tt{h}{h+1} + \a_h \p_h \Zblg{h}{h+1} \tag{using \eqref{FV_blg}} \\
& = \F_h \bigg(\b_{h+1} \ont{h}{h+1} + (1 - \b_{h+1}) \tt{h}{h} \bigg) + \a_h \p_h \Zblg{h}{h+1} \tag{using \eqref{tt_blg_hhnext}}\\
& = \F_h \bigg(\b_{h+1} \ont{h}{h+1} + (1 - \b_{h+1}) \tt{h}{h} \bigg) + \a_h \p_h \bigg(\b_{h+1} \Zlg{h}{h+1} + ( 1 - \b_{h+1} ) \Zblg{h}{h}\bigg) \tag{using \eqref{Zblg_def}}\\
& = \F_h \bigg(\b_{h+1} \ont{h}{h+1} + (1 - \b_{h+1}) \tt{h}{h} \bigg) + \a_h \p_h \bigg(\b_{h+1} \Zlg{h}{h+1} + ( 1 - \b_{h+1} )\p_h\tr\tt{h}{h} \bigg) \tag{using \eqref{Zblg_tt_def}}\\
& = \b_{h+1} \underbrace{\bigg( \F_h \ont{h}{h+1} + \a_h \p_h \Zlg{h}{h+1} \bigg)}_{\mbox{$\doteq \ont{h+1}{h+1}$, using \eqref{FV_lg}}} ~+~ ( 1 - \b_{h+1} ) \underbrace{\bigg( \F_h \tt{h}{h} + \a_h \p_h \p_h\tr\tt{h}{h} \bigg)}_{\mbox{$=\tt{h}{h}$}} \tag{regrouping}\\
& = \b_{h+1} \ont{h+1}{h+1} + ( 1 - \b_{h+1} ) \tt{h}{h} \,.
\end{align*}
We see that $\ont{h}{h+1}$ is not needed and instead we can use $\ont{h+1}{h+1}$.  Therefore, if $\ont{h+1}{h+1}$ can be computed with constant computation independent of span, then $\tt{h+1}{h+1}$ can be computed from $\tt{h}{h}$ efficiently as well, using
\begin{equation}\label{blg}
\tt{h+1}{h+1} = (1 - \b_{h+1}) \tt{h}{h} + \b_{h+1} \ont{h+1}{h+1} \,.
\end{equation}

It now remains to be shown that the online weights $\ont{h+1}{h+1}$ can be computed efficiently. We start with the difference between $\ont{h}{h+1}$ and $\ont{h}{h}$ for which we can use \eqref{th_lg} to obtain
\begin{align}
\ont{h}{h+1} - \ont{h}{h}
& = \sum_{k=0}^{h-1} \F_{h-1} \cdots \F_{k+1} \a_k \p_k \left( \Zlg{k}{h+1} - \Zlg{k}{h} \right)\label{step0_lg}\,.
\end{align}
We can find the difference $\Zlg{k}{h+1} - \Zlg{k}{h}$ from the definition in \eqref{Zlg_def_non_recursive}.  All terms not involving $\P_h$, $\R_{h+1}$ or $\P_{h+1}$ cancel, leaving us with
\begin{align*}
\Zlg{k}{h+1} - \Zlg{k}{h}
& = \underbrace{\g_{k+1}\cdots\g_h \l_{k+1}\cdots\l_{h-1}\left( ( 1 - \l_h )  \P_h + \l_h \R_{h+1} + \l_h\g_{h+1} \P_{h+1} \right)}_{\mbox{from $\Zlg{k}{h+1}$}}\\
& \quad {} - \underbrace{\g_{k+1}\cdots\g_h\l_{k+1}\cdots\l_{h-1}\P_h}_{\mbox{from $\Zlg{k}{h}$}} \\
& = \g_{k+1}\cdots\g_h\l_{k+1}\cdots\l_{h-1}\left( - \l_h \P_h + \l_h \R_{h+1} + \l_h\g_{h+1} \P_{h+1} \right)\\
& = \g_{k+1}\cdots\g_{h}\l_{k+1}\cdots\l_{h} \left( \R_{h+1} + \g_{h+1} \P_{h+1} - \P_h \right) \,.
\end{align*}
Here we see the emergence of a general form of the classical temporal-difference error for cumulative discounted returns: $\d_h \doteq \R_{h+1} + \g_{h+1} \P_{h+1} - \P_h$.\footnote{If we use our current predictions as interim targets, the TD error is $\d_t = \R_{t+1} + \g_{t+1} \p_{t+1}\tr \th_t - \p_t\tr\th_{t-1}$. This TD error uses the weights at two consecutive time steps and is therefore slightlt different from the classic TD error defined, using only the current weights $\th_t$, as $\d_t = \R_{t+1} + \g_{t+1} \p_{t+1}\tr \th_t - \p_t\tr\th_t$. The difference is important to achieve exact equivalence, although it is also possible to rewrite the new algorithms to use a more standard TD error.}
Using this, we can now continue from \eqref{step0_lg} with
\begin{align}
\ont{h}{h+1} - \ont{h}{h}
& = \sum_{k=0}^{h-1} \F_{h-1} \cdots \F_{k+1} \a_k \p_k \left( \Zlg{k}{h+1} - \Zlg{k}{h} \right) \notag\\
& = \sum_{k=0}^{h-1} \F_{h-1} \cdots \F_{k+1} \a_k \p_k \g_{k+1}\cdots\g_{h}\l_{k+1}\cdots\l_{h}\d_h  \notag\\
& = \g_h\l_h\underbrace{\left( \sum_{k=0}^{h-1} \g_{k+1}\cdots\g_{h-1}\l_{k+1}\cdots\l_{h-1}\F_{h-1} \cdots \F_{k+1} \a_k \p_k \right)}_{\mbox{$\doteq \egl_{h-1}$}} \d_h  \notag\\
& = \g_h\l_h\egl_{h-1} \d_h \,,\label{step1_lg}
\end{align}
where, similar to before, the trace vector $\egl_t$ can be updated recursively according to
\begin{align}
\egl_t
& \doteq \sum_{k=0}^{t} \g_{k+1} \cdots \g_t \l_{k+1} \cdots \l_{t} \F_{t} \cdots \F_{k+1} \a_k \p_k \notag\\
& =  \sum_{k=0}^{t-1} \g_{k+1} \cdots \g_t \l_{k+1} \cdots \l_{t} \F_{t} \cdots \F_{k+1} \a_k \p_k ~~+~~ \a_t \p_t \notag\\
& =  \g_t \l_t \F_t \sum_{k=0}^{t-1} \g_{k+1} \cdots \g_{t-1} \l_{k+1} \cdots \l_{t-1} \F_{t-1} \cdots \F_{k+1} \a_k \p_k ~~+~~ \a_t \p_t \notag\\
& =  \g_t \l_t \F_t \egl_{t-1} + \a_t \p_t \label{trace_lg}\\
& =  \g_t \l_t  \egl_{t-1} + \a_t \p_t ( 1 - \g_t \l_t \p_t\tr \egl_{t-1} ) \,.\notag
\end{align}
The trace now decays according to both $\l$ and $\g$.
Using this trace vector, we can compute $\ont{h}{h+1}$ efficiently from $\ont{h}{h}$ using \eqref{step1_lg}.  We now combine this with the final step to $\ont{h+1}{h+1}$ from $\ont{h}{h+1}$ and derive
\begin{align*}
\ont{h+1}{h+1}
& = \F_h \ont{h}{h+1} + \a_h \p_h \Zlg{h}{h+1} \tag{using \eqref{FV_lg}}\\
& = \F_h \ont{h}{h+1} + \a_h \p_h \left( \R_{h+1} + \g_{h+1} \P_{h+1} \right) \tag{using \eqref{Zlg_def}, \eqref{Zlg_tt_def}}\\
& = \F_h \left( \ont{h}{h} + \g_h\l_h\egl_{h-1}\left( \R_{h+1} + \g_{h+1} \P_{h+1} - \P_h \right)\right) + \a_h \p_h \left( \R_{h+1} + \g_{h+1} \P_{h+1} \right) \tag{using \eqref{step1_lg}}\\
& = \F_h \ont{h}{h} + \g_h\l_h \F_h \egl_{h-1} \left( \R_{h+1} + \g_{h+1} \P_{h+1} - \P_h \right) + \a_h \p_h \left( \R_{h+1} + \g_{h+1} \P_{h+1} \right) \\
& = \F_h \ont{h}{h} + ( \egl_h - \a_h \p_h ) \left( \R_{h+1} + \g_{h+1} \P_{h+1} - \P_h \right) + \a_h \p_h \left( \R_{h+1} + \g_{h+1} \P_{h+1} \right) \tag{using $\g_h\l_h \F_h \egl_{h-1} = \egl_h - \a_h \p_h$, from \eqref{trace_lg}}\\
& = \F_h \ont{h}{h} + \egl_h \left( \R_{h+1} + \g_{h+1} \P_{h+1} - \P_h \right) + \a_h \p_h \P_h \\
& = ( \I - \a_h \p_h \p_h\tr) \ont{h}{h} + \egl_h \left( \R_{h+1} + \g_{h+1} \P_{h+1} - \P_h \right) + \a_h \p_h \P_h \\
& = \ont{h}{h} + \egl_h \left( \R_{h+1} + \g_{h+1} \P_{h+1} - \P_h \right) + \a_h \p_h \left(  \P_h - \p_h\tr\ont{h}{h} \right) \\
& = \ont{h}{h} + \egl_h \d_h + \a_h \p_h \left(  \P_h - \p_h\tr\ont{h}{h} \right) \,.
\end{align*}
All weights again have matching sub- and superscripts and an equivalent TD algorithm, in the sense that $\onth_t = \ont{t}{t}$ and $\tth_t = \tt{t}{t}$, for the fully general case including cumulative discounted returns is given by
\beq \label{algTD_blg}
\begin{array}{ll}
\egl_{-1} \doteq {\bm 0}, \text{ then }\egl_t \doteq \g_t\l_t  \egl_{t-1} + \a_t \p_t ( 1 - \g_t\l_t \p_t\tr \egl_{t-1} ) ,  & t=0, \ldots, \tau-1, \\
\onth_{t+1} \doteq  \onth_t + \egl_t \d_t +\a_t \p_t (\P_t-\p_t\tr\onth_{t}) , & t=0, \ldots, \tau-1,\\
\tth_{t+1} \doteq \tth_t + \b_{t+1} ( \onth_{t+1} - \tth_t ) , & t=0, \ldots, \tau-1.
\end{array}
\eeq
The first two lines constitute the online algorithm we just derived.  The last line is from \eqref{blg} and extends this algorithm to include smooth switching between offline and online updates.  If $\b_t = 1$ for all $t$, the algorithm reduces to a variant of TD($\l$) known as true online TD($\l$) \parencite{vanSeijen:2014}, but extended to include general, potentially non-constant, sequences of $\{ \a_t \}$, $\{ \g_t \}$ and $\{ \l_t \}$.  The extension to averaging according to $\b_t$ is new to this paper.

Soft termination generalizes the episodic setting we considered previously.  This means that as far as the learning update is concerned, we do not have to treat steps on which the process actually terminates and restarts for a new episode in any special way.  To see how this works, we first renumber the time steps on consecutive episodes: if the first episode ends at time $\tau$, the initial time step of the second episode will be taken to be $\tau$ rather than $0$.  If the second episode lasts $\tau'$ steps, the third episode is then taken to begin on $\tau + \tau'$, and so on.  Together with the requirement that $\g_{\tau}=0$ on every actual termination, this is sufficient to get updates that are completely equivalent to treating the subsequent episodes completely separately.  Notice that the update on termination at some time $\tau$, and resulting in $\th_{\tau}$, uses the residual prediction on termination $\P_{\tau}$ only when multiplied with $\g_{\tau}$.  Previously we required that $\P_{\tau}=0$ because there is no further signal to predict.  This is now no longer necessary, because $\g_{\tau}=0$ already fulfills the requirement that $\g_{\tau}\P_{\tau} = 0$.  If for instance we use our current predictions, such that $\P_{t+1} \doteq \p_{t+1}\tr\th_t$ for all $t$, we can simply keep the update as is even though $\P_{\tau}$ will then be the prediction for the cumulative return of the next episode, because $\p_{\tau}$ is now defined to be its first feature vector.
Therefore, both hard and soft termination can be handled seamlessly using algorithm \eqref{algTD_blg}, and that will be our final, general algorithm.

\section{Convergence Analysis}\label{sec:analysis}

The algorithm \eqref{algTD_blg} differs from related earlier algorithms such as TD($\l$) in a few subtle but important ways. The most notable differences are the updates to the traces and the averaging due to $\b_t$. Known results on convergence therefore do not automatically transfer to this new algorithm and it is appropriate to take a moment to analyze it.


The convergence of the trusted weights $\tth$ depends on the convergence of the online weights $\onth$ and so we must investigate these jointly.  The online weights, in turn, depend on the sequences of parameters and residual predictions that are supplied.  We want our analysis to be general, which means we want to be able to handle general sequences of discounts $\{\g_t\}_{t=1}^{\infty}$, persistency parameters, $\{\l_t\}_{t=1}^{\infty}$, and residual predictions $\{ \P_t \}_{t=1}^{\infty}$.  Naturally, if any of these can change completely arbitrarily, we can have no hope of converging to any predeterminable solution.  Therefore, as in \textcite{Sutton:2014}, we allow the features, discounts and persistency parameters to be stationary functions of an underlying unobserved state, such that $\p_t \doteq \p(S_t)$, $\g_t \doteq \g(S_t)$ and $\l_t \doteq \l(S_t)$ for some fixed functions $\p : \mathcal{S} \to \mathbb{R}^n$, $\g : \mathcal{S} \to [0,1]$ and $\l : \mathcal{S} \to [0,1]$, where $\mathcal{S}$ is a state space and $S_t\in\mathcal{S}$ is the, unobserved, state of the world at time $t$.  We assume there is a steady-state distribution over these states such that all expectations used below are well-defined with respect to a distribution over states defined by $\lim_{t\to\infty}\Pr(S_t = s)$.  This setting generalizes the more standard approach where $\g_t = \g$ and $\l_t = \l$ are constants, because now these parameters can still change over time, but it avoids the possibility of arbitrary non-stationarity that would ruin convergence.

We first consider convergence when the residual predictions are also due to a fixed function of state, for instance because they are due to otherwise stationary experts or oracles.
\begin{theorem}
Let $\R_t\doteq\R(S_t)$, $\p_t\doteq\p(S_t)$, $\g_t\doteq\g(S_t)$, $\l_t\doteq\l(S_t)$ and $\P_t\doteq\P(S_t)$ all be fixed functions of (unobserved) states $S_t \in \mathcal{S}$, with a stable steady-state distribution $d$.
Then, if $\sum_{t=0}^{\infty} \a_t = \sum_{t=0}^\infty \b_t = \infty$, and $\sum_{t=0}^\infty \a_t^2 < \infty$, algorithm \eqref{algTD_blg} converges almost surely to the fixed-point solution
\[
\tth_* \doteq \E{ \p_t\p_t }^{-1}  \E{ \Zlg{t}{\infty} \p_t } \,.
\]
\end{theorem}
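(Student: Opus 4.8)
The plan is to reduce the convergence of the trusted weights $\tth_t$ to that of the online weights $\onth_t$, then to recognise the online iteration as a linear stochastic-approximation scheme with Markov noise and invoke a standard convergence theorem.

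\emph{Reduction.} Subtracting $\tth_*$ from the last line of \eqref{algTD_blg} gives $\tth_{t+1} - \tth_* = (1 - \b_{t+1})(\tth_t - \tth_*) + \b_{t+1}(\onth_{t+1} - \tth_*)$, so with $x_t \doteq \|\tth_t - \tth_*\|$ and $\epsilon_t \doteq \|\onth_t - \tth_*\|$ we have $x_{t+1} \le (1 - \b_{t+1}) x_t + \b_{t+1}\epsilon_t$. A short deterministic lemma shows that if $\b_t \in [0,1]$, $\sum_t \b_t = \infty$, and $\epsilon_t \to 0$, then $x_t \to 0$: since $\prod_{s \le t}(1 - \b_s) \to 0$ the initial value is forgotten, and the convex-combination structure turns a vanishing $\epsilon_s$ into a vanishing $x_t$. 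Hence it suffices to prove $\onth_t \to \tth_*$ almost surely, and this is exactly why $\sum_t \b_t = \infty$, with no summability condition on $\b_t^2$, is all that is required of $\b$: the $\tth$-recursion contributes no noise of its own.

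\emph{The online iteration.} Under the theorem's hypotheses the residual predictions $\P_t = \P(S_t)$ are fixed functions of state, so neither the TD error $\d_t = \R_{t+1} + \g_{t+1}\P_{t+1} - \P_t$ nor the trace $\egl_t$ depends on $\onth$, and the middle line of \eqref{algTD_blg}, $\onth_{t+1} = \onth_t + \egl_t \d_t + \a_t \p_t \P_t - \a_t \p_t \p_t\tr \onth_t$, is affine in $\onth_t$ with $-\a_t \p_t \p_t\tr \onth_t$ the only $\onth$-dependent term. I would augment the state to $Y_t \doteq (S_t, S_{t+1}, \egl_{t-1}/\a_{t-1})$, carrying the unscaled trace (whose recursion follows from \eqref{trace_lg}), verify that $(Y_t)$ is a Markov process with a well-defined steady-state behaviour, and write the increment in the standard form $\onth_{t+1} = \onth_t + \a_t\bigl(b(Y_t) - A(Y_t)\,\onth_t\bigr)$ up to lower-order terms, with $A(Y_t) = \p_t\p_t\tr$. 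Boundedness of $\egl_t$, and hence of $A$, $b$ and the iterates, is precisely where the hypothesis $\prod_{i=t}^{\infty}\g_i = 0$ is used: the decay it enforces keeps the trace summably controlled even though $\g_t\l_t$ may equal $1$ on individual steps. The associated mean field is $\dot w = \bar b - \bar A w$ with $\bar A = \E{\p_t\p_t\tr}$ in the steady-state distribution $d$ and $\bar b = \E{(\egl_t/\a_t)\,\d_t} + \E{\p_t\P_t}$.

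\emph{Fixed point and conclusion.} The one algorithm-specific step is evaluating $\bar b$. The classical $\l$-return/TD-error identity $\Zlg{t}{\infty} - \P_t = \sum_{j \ge 0}\bigl(\prod_{i=1}^{j}\g_{t+i}\l_{t+i}\bigr)\d_{t+j}$, combined with stationarity and the explicit form $\egl_t = \sum_{k=0}^{t}\a_k\bigl(\prod_{j=k+1}^{t}\g_j\l_j\bigr)\F_t\cdots\F_{k+1}\,\p_k$ (whose leading behaviour as $\a_t \to 0$ is $\egl_t/\a_t \approx \sum_{k \le t}\bigl(\prod_{j=k+1}^{t}\g_j\l_j\bigr)\p_k$), yields $\E{(\egl_t/\a_t)\,\d_t} = \E{\p_t(\Zlg{t}{\infty} - \P_t)}$ in steady state, whence $\bar b = \E{\p_t\,\Zlg{t}{\infty}}$ and the ODE reduces to $\dot w = -\E{\p_t\p_t\tr}\,(w - \tth_*)$. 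Since $\E{\p_t\p_t\tr}$ is symmetric and positive definite (its invertibility, implicit in the statement of $\tth_*$, amounts to the features being non-degenerate under $d$), this ODE has $\tth_*$ as its unique, globally asymptotically stable equilibrium; a standard stochastic-approximation theorem for Markov-noise-driven linear recursions under $\sum_t \a_t = \infty$, $\sum_t \a_t^2 < \infty$ then gives $\onth_t \to \tth_*$ almost surely, and the reduction above gives $\tth_t \to \tth_*$ almost surely.

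\emph{Main obstacle.} The delicate part is the stochastic-approximation book-keeping around the trace with time-varying step sizes: almost-sure boundedness of $\egl_t$ and $\onth_t$, existence of the steady state for $(Y_t)$, and the verification that the $\a_k$-weights ($k<t$) buried inside $\egl_t$ do not disturb the $\a_t$-normalisation of the increment in the limit. These are routine in spirit and parallel existing analyses of TD($\l$); by comparison, the telescoping identification of the fixed point and the stability of the resulting symmetric positive-definite linear ODE are easy.
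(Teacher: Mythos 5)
Your proposal is sound in outline but takes a genuinely different route from the paper. The paper's proof leans entirely on the exact forward--backward equivalence that is its central construction: since $\onth_t = \ont{t}{t}$ by derivation, it analyzes the forward view \eqref{FV_lg} instead, and because $\P_t$ is a fixed function of state the asymptotic forward-view update is literally stochastic gradient descent on the quadratic loss $\E{(\Zlg{t}{\infty} - \p_t\tr\onth)^2}$; convergence to the least-squares solution $\tth_*$ then follows from standard SGD theory with no algorithm-specific work, and the fixed point is immediate as the minimizer of that quadratic. The $\b$-averaging step is dispatched in one sentence. You instead analyze the backward view \eqref{algTD_blg} directly as a linear stochastic approximation with Markov noise: you must augment the state with the trace, identify the mean field, and recover the fixed point via the telescoping identity $\Zlg{t}{\infty} - \P_t = \sum_{j\ge 0}\bigl(\prod_{i=1}^{j}\g_{t+i}\l_{t+i}\bigr)\d_{t+j}$ together with the steady-state shift $\E{(\egl_t/\a_t)\,\d_t} = \E{\p_t(\Zlg{t}{\infty}-\P_t)}$ --- the classical Tsitsiklis--Van Roy style argument. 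What your route buys is self-containedness (it never invokes the equivalence theorem) and a fully explicit treatment of the $\b$-recursion, whose deterministic contraction lemma correctly explains why only $\sum_t\b_t=\infty$ is needed; what it costs is exactly the ``delicate bookkeeping'' you flag --- boundedness of $\egl_t$, the steady state of the augmented chain, and the $\a_k/\a_t \to 1$ normalization buried in the trace --- none of which the paper has to confront because the forward view has no trace at all. One caveat on the paper's side that your route incidentally avoids: the paper silently identifies the diagonal $\ont{t}{t}$ (which uses horizon-truncated returns) with the infinite-horizon iteration on $\Zlg{t}{\infty}$, a gap your direct backward-view analysis does not share.
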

\begin{proof}
We start by analyzing the online weights $\onth_t$.  Because of the equivalence of the forward and backward views, we can investigate the forward view, which is easier to analyze.  In other words, instead of investigating $\lim_{t\to\infty} \onth_t$ as updated through \eqref{algTD_blg},  we investigate $\lim_{t\to\infty} \ont{t}{t}$ as updated through \eqref{FV_lg}.  By construction, the end result is exactly the same.  The asymptotic forward view as the horizon goes to infinity is
\begin{align*}
& \ont{t+1}{\infty}
= \ont{t}{\infty} + \a_t \p_t ( \Zlg{t}{\infty} - \p_t\tr\ont{t}{\infty} ) \,,&&t=0,\ldots\,,\\
\text{where }\,
& \Zlg{t}{\infty} 
\doteq \lim_{h\to\infty} \Zlg{t}{h}\,,&&t=0,\ldots\,.
\end{align*}
Because $\Zlg{t}{h}$ does not depend on the weights, this is a standard stochastic gradient-descent update $\onth_{t+1} = \onth_t - \a_t \nabla_{\onth} l(\onth) \rvert_{\onth_t}$ on the quadratic loss function
\[
l(\onth)\doteq \E{ ( \Zlg{t}{\infty} - \p_t\tr\onth )^2 }\,.
\]
If the step sizes are suitably chosen, for instance such that $\sum_{t=0}^\infty \a_t = \infty$ and $\sum_{t=0}^{\infty} \a_t^2 < \infty$ \citep{Robbins:1951}, and if the means and variances of $\Zlg{t}{\infty}$ and $\p_t$ are well-defined and bounded for all $t$, this update converges to the fixed-point solution $\th_*$ that minimizes the quadratic loss \citep[cf.][]{Kushner:2003}, such that
\[
\lim_{t\to\infty} \onth_t = \th_* \doteq \E{ \p_t\p_t\tr }^{-1} \E{ \p_t \Zlg{t}{\infty} } \,.
\]
It is straightforward to see that $\tth_t$ will have the same limit; it suffices to have $\sum_{t=0}^\infty \b_t = \infty$.
\end{proof}

Although convergence is already guaranteed when $\b_t = 1$ for all $t$, recent work has shown that for similar stochastic gradient algorithms the optimal rate of convergence is attained if $\b_t$ decreases much faster than $\a_t$, specifically when $\b_t = O(t^{-1})$ while $\a_t = \a$ for some constant $\a$ \citep{Bach:2013}.  More generally, it seems likely that convergence also holds if $\sum_{t=0}^{\infty} \b_t^2 < \infty$ and $\sum_{t=0}^\infty \a_t^2 = \infty$.  The observation that $\b_t$ should perhaps decrease over time for faster learning may seem at odds with our introduction of this parameter as a degree of trust.  However, these two views are quite compatible if we consider $\b_t$ to be the degree of trust we place in the online updates relative to the trust we place in our current predictions due to the trusted weights.  When the trust in the predictions increases over time, the relative trust in the inherently noisy online targets should then decrease.

Although Theorem 1 is already fairly general, it does not cover the important case when the residual predictions additionally depend on the weights we are updating.  It makes sense to use the predictions we trust most and therefore we now consider what happens when $\P_t \doteq \p_t\tr\tth_{t-1}$.  Notice that we have to use $\tth_{t-1}$ rather than $\tth_t$, because $\P_t$ is used in the computation of $\tth_t$ and so the latter is not yet available when we compute $\P_t$.  The analysis of this case is more complex than the previous one, because $\P_t$ is no longer a constant function of state.  This means the update is no longer a standard gradient-descent update on a quadratic loss, because the target $\Zlg{t}{h}$ for the online weights itself depends on the trusted weights that we are simultaneously updating.

The results by \citet{Bach:2013} on stochastic gradient descent indicate that perhaps the most interesting case is where $\b_t$ decreases faster than $\a_t$, such that $\lim_{t\to\infty} \b_t/\a_t = 0$. This suggests an analysis on two time scales is appropriate.
\begin{theorem}
Let $\R_t\doteq\R(S_t)$, $\p_t\doteq\p(S_t)$, $\g_t\doteq\g(S_t)$ and $\l_t\doteq\l(S_t)$ all be fixed bounded functions of (unobserved) states $S_t \in \mathcal{S}$, with a stable steady-state distribution $d$.  Define $\P_t \doteq \p_t\tr\tth_{t-1}$.
Then, if $\sum_{t=0}^{\infty} \a_t = \sum_{t=0}^\infty \b_t = \infty$, $\sum_{t=0}^\infty \a_t^2 < \infty$, and $\lim_{t\to\infty}\frac{\b_t}{\a_t} = 0$, algorithm \eqref{algTD_blg} converges almost surely to the TD fixed-point solution $\tth_*$ that minimizes the mean-squared projected Bellman error \citep{Sutton:2008,Sutton:2009}, such that
\beq\label{MSPBE}
\E{ ( \Z_t(\tth_*) - \p_t\tr\tth_*) \p_t\tr } \E{ \p_t\p_t }^{-1}  \E{ \p_t ( \Z_t(\tth_*) - \p_t\tr\tth_*) } = 0 \,,
\eeq
where
\[
\Z_t(\th) \doteq \R_{t+1} + \g_{t+1} ( 1 - \l_{t+1} ) \p_t\tr\th + \Z_t(\th) \,,~~\forall \th \,.
\]
\end{theorem}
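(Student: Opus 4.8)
To prove Theorem~2, the plan is to treat algorithm~\eqref{algTD_blg} as a two-timescale stochastic approximation. Since $\lim_{t\to\infty}\b_t/\a_t=0$, the averaged weights $\tth_t$ evolve on a slower timescale than the online weights $\onth_t$; note also that $\sup_t\b_t/\a_t<\infty$ together with $\sum_t\a_t^2<\infty$ already forces $\sum_t\b_t^2<\infty$, so the martingale-noise condition needed for the slow recursion comes for free. I would then (i)~freeze the slow variable and identify the limit of the fast recursion, (ii)~read off the limiting ODE for the slow recursion, and (iii)~show that its unique globally asymptotically stable equilibrium is the fixed point $\tth_*$ of~\eqref{MSPBE}, before invoking the standard two-timescale convergence theorem.

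\emph{Fast timescale.} Freeze the trusted weights at a value $\th$, so that the residual predictions become $\P_t=\p_t\tr\th$ (the one-step lag $\tth_{t-1}$ versus $\tth_t$ is $O(\b_t)$ and asymptotically negligible). The trace $\egl_t$ never depends on $\onth$, and with $\th$ frozen neither does the TD error $\d_t=\R_{t+1}+\g_{t+1}\p_{t+1}\tr\th-\p_t\tr\th$; hence the online recursion is linear in $\onth_t$ with a driving term that does not involve $\onth$. By the exact forward--backward equivalence of Section~\ref{sec:generalizations}, which is insensitive to the origin of the residual predictions, these online weights equal the diagonal $\ont{t}{t}$ of the forward view~\eqref{FV_lg} run with $\P_t=\p_t\tr\th$, whose untruncated targets $\lim_{h\to\infty}\Zlg{t}{h}$ are the $\Z_t(\th)$ of the theorem statement. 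Because $\prod_{i=t}^{\infty}\g_i=0$, the truncated targets $\Zlg{t}{t}$ approach these limits, so $\ont{t}{t}$ has the same limit as the genuine stochastic-gradient recursion on the quadratic loss $\E{(\Z_t(\th)-\p_t\tr w)^2}$ already analyzed in the proof of Theorem~1. Its limiting ODE is $\dot w=\E{\p_t\Z_t(\th)}-Cw$ with $C\doteq\E{\p_t\p_t\tr}$ symmetric positive definite, so it has the unique globally asymptotically stable equilibrium $\onth_*(\th)\doteq C^{-1}\E{\p_t\Z_t(\th)}$; since $\Z_t(\th)$ is affine in $\th$, the map $\th\mapsto\onth_*(\th)$ is affine and in particular Lipschitz, as the two-timescale framework requires.

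\emph{Slow timescale and the fixed point.} With the fast iterate tracking $\onth_*(\tth_t)$, the averaging line of~\eqref{algTD_blg} becomes a stochastic approximation with step sizes $\b_t$ and mean field $\onth_*(\tth_t)-\tth_t$, so the limiting ODE for the slow variable is $\dot{\bar\th}=\onth_*(\bar\th)-\bar\th$. Its equilibria solve $C\th=\E{\p_t\Z_t(\th)}$, that is, $\E{\p_t(\Z_t(\th)-\p_t\tr\th)}=0$, and since $C$ is invertible this is equivalent to~\eqref{MSPBE}, so the equilibrium is precisely $\tth_*$. For stability, I would use that the fixed-point condition is the standard projected Bellman equation for the $\l$-return with state-dependent $\g$ and $\l$, and that the associated projected Bellman operator is a contraction in the steady-state-weighted norm $\|\cdot\|_d$ \citep{Sutton:2014}; in parameter space this is the property that $\onth_*(\cdot)$ is a $\kappa$-contraction in $\|\cdot\|_C$ for some $\kappa<1$, so that $V(\th)=\frac{1}{2}\|\th-\tth_*\|_C^2$ is a strict Lyapunov function for the slow ODE and the latter is globally asymptotically stable at $\tth_*$.

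Both limiting ODEs are linear with globally asymptotically stable equilibria, so a standard scaled-ODE (Borkar--Meyn) argument gives almost-sure boundedness of the iterates, and the two-timescale convergence theorem then gives $\onth_t\to\onth_*(\tth_*)=\tth_*$ and $\tth_t\to\tth_*$ almost surely under the stated step-size conditions. I expect the main obstacle to lie in two bookkeeping steps: first, the truncation estimate that passes from the backward view actually executed by the algorithm to the forward-view recursion with the untruncated targets $\Z_t(\th)$, which is handled as in the true-online analyses \citep{vanSeijen:2014} using $\prod_{i=t}^{\infty}\g_i=0$ and the boundedness of the functions of state, but requires a uniform bound on the eligibility trace $\egl_t$; and second, verifying the Lipschitz and martingale-difference hypotheses of the two-timescale theorem, where the genuinely new ingredient is controlling the interaction between the eligibility trace and the slow $\b_t$-averaging of $\onth_t$ (the slow recursion's noise being exactly the fast-iterate tracking error).
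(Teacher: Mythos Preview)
Your proposal is correct and follows essentially the same two-timescale strategy as the paper: treat the online weights $\onth_t$ as the fast component (analyzed with $\tth$ frozen, via the forward--backward equivalence reducing to stochastic gradient on a quadratic loss) and the trusted weights $\tth_t$ as the slow component tracking $\onth_*(\tth_t)$, with the fixed point identified as the MSPBE minimizer. The paper's proof is considerably terser: it invokes the two-timescale framework and asserts convergence of the slow recursion to the fixed point without explicitly addressing stability of the slow ODE, boundedness of the iterates, or the derivation of $\sum_t\b_t^2<\infty$ from the stated hypotheses---all points you supply (via the contraction of the projected Bellman operator, a Lyapunov argument, and Borkar--Meyn), so your version is in fact more complete than the paper's own argument.
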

\begin{proof}
In two-time-scale analyses, we are allowed to analyze the faster updates as if the slower updates have stopped.  This means that in analyzing the updates to the online weights $\onth_t$, we can assume the trusted weights $\tth_t$ are constant to analyze where $\onth_t$ converges towards as a function of the stationary $\tth_t$.  On the other hand, when we analyze the slower updates to the trusted weights $\tth_t$ we are allowed to assume the faster updates to the online weights converge completely between each two steps.  For more detail on analyzing stochastic approximations on two times scales, we refer to \citet{Borkar:1997}, \citet{Borkar:2008}, \citet{Kushner:2003}, and \citet{Konda:2004}.

We first analyze the convergence of the faster updates to the online weights, where we can assume that the trusted weights are stationary at some value $\tth$.  Then, using $\P_t \doteq \p_t\tr\tth$, the targets for the updates of the forward view are
\begin{align*}
\Zlg{t}{t}(\tth)
& = \p_t\tr\tth \,,\\
\Zlg{t}{h}(\tth)
& = \R_{t+1} + \g_{t+1} ( 1 - \l_{t+1} ) \p_t\tr\tth + \g_{t+1} \l_{t+1} \Zlg{t+1}{h}(\tth) \,,
\end{align*}
where we have extended the notation slightly to make the dependence of $\Zlg{t}{h}(\tth)$ on $\tth$ explicit.  Notice that the residual predictions on each time step depend on the same stationary trusted weights $\tth$.
Because of the assumed stationarity of $\tth$, the updates to the weights $\ont{t}{h}$ of the forward view can again be considered standard stochastic gradient updates and therefore these weights converge towards the fixed point $\onth_*(\tth)$, where again we make the dependence on $\tth$ explicit, defined by
\[
\onth_*(\tth) = \E{ \p_t \p_t\tr }^{-1} \E{ \p_t \Zlg{t}{\infty}(\tth) } \,,
\]
where $\Zlg{t}{\infty}(\tth) = \lim_{h\to\infty} \Zlg{t}{h}(\tth)$ denotes the limit of the target of the update as the horizon grows to infinity.  In an episodic setting, $\Zlg{t}{\infty} = \Zlg{t}{\tau}$ for all $t<\tau$, where $\tau$ denotes the first termination after time $k$.  More in general, $\Zlg{t}{\infty}$ is always well-defined because we require $\prod_{t=0}^{\infty} \g_t = 0$.

For the analysis of the slower updates to $\tth_t$, we can now assume the faster time scale has already converged to its fixed point $\onth_*(\tth_t)$ for the current weights.  Therefore, we analyze the update
\begin{align*}
\tth_{t+1}
& = \tth_t + \b_{t+1} ( \onth_*(\tth_t) - \tth_t ) \\
& = \tth_t + \b_{t+1} ( \E{ \p_k \p_k\tr }^{-1} \E{ \p_k \Zlg{k}{\infty}(\tth_t) } - \tth_t ) \,.
\end{align*}
This is a stochastic-approximation update that, under the conditions that $\sum_{t=0}^\infty \b_t = \infty$ and $\sum_{t=0}^\infty \b_t^2 < \infty$, converges almost surely to the fixed point $\tth_*$ that satisfies
\[
\tth_* = \E{ \p_t \p_t\tr }^{-1} \E{ \p_t \Zlg{t}{\infty}(\tth_*) } \,.
\]
If we multiply both sides with $\E{ \p_t\p_t\tr }$, this implies that
$
\E{ \p_t \p_t\tr \tth_* } = \E{ \p_t \Zlg{t}{\infty}(\tth_*) }
$
and therefore, by moving both terms to the same side and then multiplying with $\E{ \p_t\p_t\tr }^{-1}$,
\[
\E{ \p_t \p_t\tr }^{-1} \E{ \p_t ( \Zlg{t}{\infty}(\tth_*) - \p_t\tr\tth_*) } = {\bm 0} \,.
\]
It follows immediately that $\tth_*$ minimizes the mean-squared projected Bellman error completely to zero, as desired.
\end{proof}

\section{Discussion}\label{sec:discussion}
In this paper, we have considered how to answer predictive questions with algorithms that use constant computation per time step that is proportional to the number of learned weights, and 
that is independent of the span of the prediction.  We considered both final and cumulative outcomes, under online and offline updating, with and without persistency of the residual predictions we encounter during an episode, and with hard and soft termination.  In the end, we obtained a single general algorithm that can be used for all these different predictive questions, which is shown in \eqref{algTD_blg}.  This algorithm is guaranteed to be convergent under typical, fairly mild, technical conditions.


Some extensions remain for future work.  In particular, we have not considered how different policies of behavior can influence our predictions, and as a result have not talked about the problem of control in which the goal is to find the optimal policy for a given (reward) signal.
Our analysis already extends naturally to the prediction of action values, from which control policies can be easily distilled.  Then, using a form of policy iteration \citep{Bellman:1957,Howard:1960}, we can repeatedly switch between estimating and improving the policy to tackle the problem of optimal control.  However, to properly and fully include adaptable policies, we would in addition need to carefully consider the problem of learning off-policy, about action-selection policies that differ from the one used to generate the data \citep{SuttonBarto:1998}.  This is consistent but orthogonal to the ideas outlined in this paper, and such off-policy predictions (including those about the greedy and, ultimately, optimal policy) are learnable through a proper use of rejection sampling, as in Q-learning, or importance sampling \citep{Precup:2000, Precup:2001, Maei:2011,Sutton:2014,vanHasselt:2014,Mahmood:2014}.

All algorithms considered in this paper are in a sense descendent from a linear stochastic gradient, or LMS, update.  The main idea of span-independent computation is more general and can be applied quite naturally to other settings, including for instance non-linear functions such as deep neural networks \citep{LeCun:2015,Mnih:2015} or to quadratic-time linear-function algorithms as in LSTD \citep{Bradtke:1996}.  Not all updates may have fully equivalent span-independent counterparts, but even then it may be more important to be independent of span than to be exactly equivalent.

\printbibliography

\end{document}

One might hope to find a better way, wherein on each step one could decide how much one trusted that step's interim residual prediction (or its overall cumulative-outcome prediction) and use a step-by-step weighting. In fact, there is a simple way to achieve such a sliding confidence in the interim predictions, utilizing the existing freedom to set the interim predictions arbitrarily. For example, we can set them such that the error on each step (except the last of the episode) is zero, and still on the last step recover the effect of doing exact updates on each step of the way. And this can be generalized to smoothly scale, or throttle, the effect of the interim predictions. Let $\b_t>0$ be a scale factor on the effect of the interim prediction for time $t$. We set $\b_t=0$ to say we want no effect of the interim prediction and $\b_t=1$ to say we want to have full effect. 
I believe we get exactly the desired sliding from no effect of an arbitrary interim prediction, now denoted $\bar\P_t, t=1, \ldots, \tau$ (e.g., $\bar\P_t=\p_t\tr\th_{t-1}$), to full effect, by defining 
\beq
\P_1\doteq\bar\P_1, \text{~~then~~} \P_{t} \doteq \b_t\bar\P_{t} + (1-\b_t)(\P_{t-1}-\R_{t}), \qquad t=2, \ldots, \tau, 
\eeq
and $\P_{\tau}\doteq 0$ as always.
Is this correct? I wish I could prove it correct.
